\documentclass[10pt]{article} 
\usepackage[preprint]{tmlr}
\usepackage{tikz}
\usetikzlibrary{positioning,shapes,arrows,calc,fit}

\usepackage{amsmath,amsfonts,bm}

\def\eqref#1{equation~\ref{#1}}

\def\1{\bm{1}}

\DeclareMathAlphabet{\mathsfit}{\encodingdefault}{\sfdefault}{m}{sl}
\SetMathAlphabet{\mathsfit}{bold}{\encodingdefault}{\sfdefault}{bx}{n}

\usepackage{hyperref}
\usepackage{url}
\usepackage{algorithm}
\usepackage{amsthm}
\usepackage{booktabs}
\usepackage{xcolor}
\usepackage{amssymb}
\usepackage{threeparttable}
\usepackage{microtype}
\usepackage{graphicx}
\usepackage{amsmath}
\usepackage{subfigure}
\usepackage[shortlabels]{enumitem}
\usepackage{booktabs} 
\usepackage{multirow} 
\usepackage{placeins}
\usepackage{amsthm}
\newtheorem{example}{Example}
\usetikzlibrary{positioning, arrows.meta, shapes.geometric, shapes.misc, fit, calc, backgrounds}

\usepackage{hyperref}
\usepackage{float}
\clearpage

\nopagebreak[4]

\usepackage{amsmath}
\usepackage{amssymb}
\usepackage{mathtools}
\usepackage{amsthm}

\usepackage[capitalize,noabbrev]{cleveref}

\theoremstyle{plain}
\newtheorem{theorem}{Theorem}[section]
\newtheorem{proposition}[theorem]{Proposition}
\newtheorem{lemma}[theorem]{Lemma}
\newtheorem{corollary}[theorem]{Corollary}
\theoremstyle{definition}
\newtheorem{definition}[theorem]{Definition}

\theoremstyle{remark}
\newtheorem{remark}[theorem]{Remark}

\title{Real vs. Complex Spectral Bases for Neural Operators: The Role of Green's Function Alignment\thanks{Code available at \url{https://github.com/jaysulk/hartley-neural-operator}}}

\author{\name Jason Sulskis \email jason.sulskis@gtri.gatech.edu \\
\addr Department of Computer Science\\
      University of Illinois at Chicago\\[0.5em]
      \addr Electronic Systems Laboratory, Applied Embedded Systems Division\\
      Georgia Tech Research Institute
      \AND
      \name Sathya Ravi \email sathya@uic.edu  \\
      \addr Department of Computer Science\\
      University of Illinois at Chicago}

\def\month{MM}  
\def\year{YYYY} 
\def\openreview{\url{https://openreview.net/forum?id=XXXX}} 

\begin{document}

\maketitle

\begin{abstract}
Hartley Neural Operators (HNO) are introduced as the exact real-valued counterpart of Fourier Neural Operators (FNO), which learn PDE solution operators by parameterizing global convolutions in the complex Fourier domain. For real-valued solutions, the complex FFT introduces redundancy through conjugate symmetry, so HNO replaces it with the purely real Discrete Hartley Transform and learns a single real multiplier per retained spectral mode, requiring no complex arithmetic. Since the real Hartley spectrum is not halved by conjugate symmetry, HNO keeps twice as many frequency corners as FNO, yet uses one real weight where FNO carries a complex pair; at equal width, the two operators are iso-parametric and differ only in their spectral basis. The central thesis is that the best basis is a property of the operator itself. For self-adjoint elliptic operators like Poisson and biharmonic equations, Green's functions are real and symmetric, and the real Hartley multiplier diagonalizes them exactly, favoring HNO. Time-dependent operators, by contrast, involve phase—oscillation in the wave equation, transport in advection, Burgers, and Navier-Stokes—which a real diagonal multiplier cannot structurally represent; FNO is preferred there, with its advantage increasing alongside the operator's phase content, leaving the phaseless heat equation as the borderline case. To ensure any accuracy difference is attributable solely to the basis, both operators are trained with identical optimizers, schedules, and regularization. Benchmarks span PDE classes, three initial-condition families, and both periodic and Dirichlet boundary conditions. The observed split between elliptic and time-dependent operators, monotone in phase content, matches the Green's-function theory developed here. Rather than a universal winner, the findings yield a predictive rule: match the spectral basis to the symmetry of the solution operator.
\end{abstract}

\section{Introduction}
\label{introduction}

\paragraph{Solving PDEs is hard.} PDEs govern fundamental physics, from seismic wave propagation to heat diffusion and quantum mechanical evolution. Classical numerical methods—finite differences \citep{leveque2007}, finite elements \citep{johnson2009}, and spectral methods \citep{canuto2006}—have enabled remarkable progress, yet each faces limitations in accuracy, stability, or computational cost for complex, high-dimensional, or real-time applications. Parabolic equations like the heat equation smooth initial discontinuities but cause high-frequency decay, whereas hyperbolic equations such as the wave equation preserve information without dissipation, though their numerical schemes suffer dispersion errors—problematic for seismology, acoustics, and electromagnetics. Nonlinear equations like Burgers' equation develop steep gradients and shocks that challenge fixed-basis representations, while fluid dynamics governed by the Navier-Stokes equations introduce vorticity transport and turbulent cascades across scales. Elliptic equations (Poisson, biharmonic) impose global coupling, as solutions depend on the entire domain.

Traditional solvers require recomputation from scratch for each new initial condition or parameter, a cost that becomes prohibitive in uncertainty quantification, inverse problems, or real-time forecasting \citep{pathak2022}. Neural operators \citep{li2020b,lu2021,kovachki2023} overcome this by learning the solution operator, amortizing cost across problem families and evaluating in milliseconds once trained.

Neural operators provide a compelling alternative to conventional PDE solvers by learning mappings directly between function spaces. Among these, the Fourier Neural Operator (FNO) \citep{li2020b} has proven especially effective, parameterizing convolutional filters in the frequency domain through the Fast Fourier Transform. Its architecture employs spectral layers that first lift inputs to higher-dimensional representations, then apply truncated Fourier convolutions that retain only low-frequency modes, and finally project back to the output dimension. Since convolution performed in frequency space is independent of the mesh, FNO naturally generalizes across different discretizations. The theoretical underpinnings of this approach trace back to the universal approximation theorem for operators \citep{chen1995}. In parallel, DeepONet \citep{lu2021} offers an alternative branch-trunk architecture, while wavelet-based variants \citep{tripura2022,gupta2021} enhance time-frequency localization, making them well suited to multiscale phenomena.

However, insufficient attention has been paid to whether the FFT itself provides the optimal spectral basis for neural operator learning. In the case of real-valued PDE solutions—which represent the vast majority of physical systems—the complex-valued FFT introduces representational redundancy through conjugate symmetry: half of the spectrum is fully determined by the other half, yet the learnable weight matrices must implicitly discover this constraint on their own. To address this, we introduce the Hartley Neural Operator (HNO), which substitutes the FFT with the purely real-valued Discrete Hartley Transform \citep{hartley1942,bracewell1983}. In contrast to Fourier transforms, every Hartley coefficient is independent, so all frequency quadrants must be explicitly handled; however, this approach eliminates complex arithmetic and the redundancy arising from conjugate symmetry. For PDEs characterized by symmetric Green's functions, the Hartley convolution theorem reduces to elementwise multiplication, rendering HNO directly comparable to FNO in terms of learning complexity while operating entirely within the real domain.

A systematic comparison across PDE classes, three initial-condition families, and both periodic and Dirichlet boundary conditions—with identical training for both operators—yields the following findings.

\begin{enumerate}
    \item \textbf{A clean real-valued mirror of FNO.} Our first contribution is a clean real-valued counterpart to FNO. HNO operates as the precise real analogue of FNO: each retained mode gets one real multiplier, with no even/odd splitting and no complex arithmetic. Since the real Hartley spectrum is not halved by conjugate symmetry, HNO keeps twice as many corners as FNO, yet each carries only a single real weight, so the two models are iso-parametric at equal width. This setup matches the parameter-efficient shared-real-weight scheme of \citet{wong2023hartleymha,wong2025hnoseg} and leaves the basis as the sole distinguishing factor.

    \item \textbf{A basis--operator alignment principle with theoretical backing.} Second, we offer a basis–operator alignment principle supported by theory. We demonstrate that self-adjoint elliptic operators possess real, symmetric Green's functions whose Hartley spectra are real, enabling exact representation with one real multiplier per mode (Appendix~\ref{app:theory}). Conversely, this real constraint prevents HNO from capturing operators whose symbols involve phase. Thus, the choice of basis hinges on the symmetry of the solution operator, not merely on PDE class labels.

    \item \textbf{An operator-phase ordering that predicts the empirical split.} Third, we identify an operator-phase ordering that accounts for the observed empirical division. The benchmark results split cleanly—HNO excels on elliptic operators, while FNO dominates time-dependent ones—and this split follows a monotonic pattern tied to phase content. The biharmonic operator, being the most self-adjoint, shows the largest HNO advantage; advection and Burgers, as the most transport-heavy cases, favor FNO the most; and the phaseless heat equation falls right at the boundary. The initial-condition family alters the effect's size but never its direction.

    \item \textbf{Benchmarks and cost analysis.} Fourth, we contribute benchmarks and a cost assessment. We present Burgers' and 2D Navier-Stokes vorticity benchmarks for Hartley-based operators, alongside an arithmetic-cost analysis with radix-4 FHT measurements. This analysis reveals that the extra wall-clock time is an artifact of the emulated-DHT backend, not an inherent property of the Hartley transform (Appendix~\ref{app:cost}).
\end{enumerate}

The takeaway is a guideline, not a ranking: opt for the real Hartley basis when the solution operator is (nearly) self-adjoint and free of phase—typical of elliptic solves or diffusion—and turn to the complex Fourier basis when the operator involves oscillation or transport.

\section{Background}

\paragraph{The Discrete Hartley Transform.}
The DHT \citep{bracewell1983} of a sequence $f[n]$ of length $N$ is:
\begin{equation}
    H_k = \sum_{n=0}^{N-1} f[n] \cdot \mathrm{cas}\!\left(\frac{2\pi kn}{N}\right)
    \label{eq:dht}
\end{equation}
where $\mathrm{cas}(\theta) = \cos(\theta) + \sin(\theta)$. A key property is that the DHT is self-inverse: applying it twice recovers the original signal (up to normalization by $1/N$). For real-valued inputs, the DHT and DFT are related by
\begin{equation}
    H\{f\}(k) = \mathrm{Re}\{F\{f\}(k)\} - \mathrm{Im}\{F\{f\}(k)\},
    \label{eq:hartley_fourier_relation}
\end{equation}
a relationship that is central to our implementation: we compute the DHT via \texttt{torch.fft} and extract $\mathrm{Re} - \mathrm{Im}$, obtaining GPU-accelerated Hartley transforms without a custom kernel.

The Hartley convolution theorem \citep{bracewell1984} states that for real signals $x$ and $y$ with Hartley spectra $X$ and $Y$, the circular convolution $x \circledast y$ has spectrum:
\begin{equation}
    Z_k = \frac{1}{2}\left[X_k(Y_k + Y_{-k}) + X_{-k}(Y_k - Y_{-k})\right]
    \label{eq:hartley_conv}
\end{equation}
where $Y_{-k} = Y_{N-k \bmod N}$. Unlike Fourier convolution (elementwise complex multiplication), Hartley convolution couples frequencies $k$ and $-k$---however, for symmetric filters satisfying $Y_k = Y_{-k}$, this reduces to elementwise multiplication $Z_k = X_k \cdot Y_k$. This simplification is significant for elliptic PDEs, whose Green's functions are real and symmetric (Theorem~\ref{thm:elliptic_greens} in Appendix~\ref{app:theory}).

\paragraph{Comparison to other transforms.}
The Hartley transform occupies a conceptual position between the FFT and localized transforms such as wavelets \citep{daubechies1988} or the DCT \citep{ahmed1974}. It shares with the FFT a global frequency decomposition and $\mathcal{O}(N \log N)$ complexity, yet it differs by working exclusively with real numbers. Wavelets, meanwhile, deliver time-frequency localization that helps with transient or multiscale signals, but they demand a choice of mother wavelet and are prone to shift variance. The DCT compresses energy well for smooth data, though it produces blocking artifacts where discontinuities occur. When PDEs involve smooth, periodic solutions and real-valued dynamics, the Hartley transform therefore supplies a fitting representation that avoids the burden of complex arithmetic.

Across all six PDEs in our preliminary tests, the Multiwavelet Transform operator (MWT) \citep{tripura2022,gupta2021} consistently underperformed both FNO and HNO, lagging by factors of 2–4$\times$. This outcome matches the general pattern that smooth, periodic solutions rich in global frequency content are better served by spectral bases than by compactly-supported wavelets; Appendix~\ref{app:transforms} offers a detailed comparison of the Hartley transform against wavelet and cosine alternatives. Additional studies have since examined other spectral choices for neural operators. The Walsh-Hadamard Neural Operator \citep{cavallazzi2025whno} relies on rectangular wave functions tailored to discontinuous coefficients, whereas Convolutional Neural Operators \citep{raonic2023cno} sidestep spectral representations altogether in favor of bandlimited convolutions. Together, these efforts underscore our argument that picking a spectral basis is an active design decision with meaningful performance consequences, not a fixed convention.

In parallel, \citet{wong2023hartleymha} and \citet{wong2025hnoseg} adapted Hartley-based neural operators to 3D medical image segmentation. Their work demonstrates that substituting the DHT for the FFT permits nonlinear operations in the frequency domain—an impossibility with complex-valued spectra—and attains leading resolution robustness with fewer than 35k parameters. Their model employs a single shared real frequency-domain weight per mode, a configuration that our clean HNO mirrors exactly: one real multiplier per retained mode, instead of separate even/odd weight pairs for each quadrant. Our theoretical account, grounded in Green's function symmetry and operator phase content, clarifies why Hartley-based operators prove effective for certain operators yet fail for others. Consequently, our comparison of FNO against HNO zeroes in on the core issue of real versus complex spectral representations for this class of problems.

\paragraph{Data-driven neural operator training.}
We train HNO via supervised learning on input-output pairs produced by numerical solvers. Because this data-driven approach learns the solution operator directly from examples, it avoids the need for automatic differentiation through the PDE and circumvents the optimization challenges that arise when balancing multiple loss terms \citep{wang2020}. Importantly, for time-dependent PDEs we generate ground truth using finite difference methods rather than spectral solvers, thus ensuring that the training data carry no implicit bias toward Fourier-based representations. For elliptic PDEs such as Poisson and biharmonic equations, however, we employ spectral solvers, since the neural operator learns the source-to-solution mapping rather than time evolution and the spectral solve is exact. This setup lets us isolate the influence of the spectral basis choice: any performance differences between HNO and FNO then reflect how well the underlying transform suits the PDE structure, rather than artifacts stemming from data generation.

\paragraph{Matched training for a basis-only comparison.}
Because HNO and FNO are iso-parametric at equal width, we train both with an identical optimizer, learning rate, schedule, weight decay, gradient clipping, and width. This setup provides a fair comparison, ensuring that any difference in accuracy stems from the spectral basis rather than from tuning. Notably, an earlier, over-parameterized Hartley variant—which placed separate even and odd weight pairs on every frequency quadrant—seemed to require its own training recipe; however, that need was an artifact of the redundant parameterization. In contrast, the clean real-diagonal HNO trains stably under FNO's settings, making matched hyperparameters both fairer and sufficient.

\section{Experiments}

We compare the Hartley Neural Operator (HNO) with the Fourier Neural Operator (FNO) \citep{li2020b,li2020a} on canonical PDEs drawn from five classes: parabolic (heat), hyperbolic (wave), advective (advection–diffusion), nonlinear (Burgers and 2D Navier–Stokes in vorticity form), and elliptic (Poisson and biharmonic). To isolate the influence of basis alignment from that of data distribution, we consider three families of initial conditions; to probe sensitivity to boundaries, we run experiments under both periodic and homogeneous Dirichlet conditions. The two operators are iso-parametric at equal width and are trained with the same procedure (Section~\ref{sec:hpo}), using implementations in PyTorch \citep{paszke2019}.

\subsection{Initial Condition Families}

We evaluate three IC types to separate basis alignment from data distribution effects, following the fair comparison protocol of \citet{lu2022}.

\paragraph{Gaussian Random Fields (GRFs).}
Stochastic initial conditions were sampled via the spectral method using a Matérn covariance \citep{matern1986,gneiting2010} with $\nu = 2.5$, $\ell = 0.15$, and $\sigma = 1.0$. These Gaussian random fields provide broadband spectral content with power-law decay $S(k) \propto (1 + \ell^2|k|^2)^{-(\nu+d/2)}$, representing realistic scenarios in turbulence and stochastic forcing. This family is the most informative test case, as its broadband frequency content stresses both spectral representations across the full mode range.

\paragraph{Eigenfunction Expansions.}
Superpositions of Laplacian eigenfunctions, weighted by Sobolev coefficients as in \citep{canuto2006}, take the form \(u_0(x, y) = \sum_{k,\ell} \hat{u}_{k\ell}(1 + k^2 + \ell^2)^{-s} \sin(\pi k x) \sin(\pi \ell y)\). Because these functions are inherently Fourier modes, they constitute a test scenario in which the FNO's native basis aligns structurally with the data.

\paragraph{Gaussian Bump Superpositions.}
Spatially localized initial conditions comprise 2–5 Gaussian bumps with random centers in $[0.2, 0.8]^2$, widths $\sigma \in [0.06, 0.15]$, and amplitudes $a \in [-1, 1]$. Their spectral content decays rapidly, concentrating energy in few low-frequency modes.

\subsection{Ground Truth Generation}

We evaluate time-dependent PDEs at 51 time points over \(t \in [0, 0.5]\) on a \(64 \times 64\) grid, following \citep{lu2022}. For the heat and wave equations, exact Fourier-space closed forms—\(\hat{u}(k,t) = \hat{u}_0(k) e^{-\nu |k|^2 t}\) and \(\hat{u}(k,t) = \hat{u}_0(k) \cos(c|k|t)\), respectively—serve as ground truth because they carry no discretization error. Since Burgers and Navier-Stokes lack such closed forms, we instead employ finite difference methods, avoiding any implicit bias toward Fourier-based representations in the training data: Burgers uses first-order upwind advection with central diffusion and adaptive CFL subcycling, while Navier-Stokes relies on upwind vorticity transport with a spectral streamfunction solve and adaptive subcycling. The advection-diffusion equation is solved with an upwind/spectral transport scheme at speed \(c_\mathrm{adv}=1.5\) and diffusivity \(\nu_\mathrm{adv}=0.01\), under both boundary conditions. Elliptic problems are solved exactly, with Poisson given by \(\hat{u}_{k,\ell} = \hat{f}_{k,\ell} / (4\pi^2(k^2 + \ell^2))\) and the biharmonic equation by \(\hat{u}_{k,\ell} = \hat{f}_{k,\ell} / (16\pi^4(k^2 + \ell^2)^2)\). Each time-dependent PDE is tested at three parameter settings: \(\nu_\mathrm{heat} \in \{0.005, 0.01, 0.05\}\), \(c_\mathrm{wave} \in \{0.5, 1.0, 2.0\}\), \(\nu_\mathrm{Burgers} \in \{0.01, 0.02, 0.05\}\), and \(\nu_\mathrm{NS} \in \{0.001, 0.005, 0.01\}\). After normalizing all solutions to unit maximum amplitude, we generate 200 samples per configuration, reserving 160 for training and 40 for testing.

\subsection{Network Architectures}

\paragraph{FNO and HNO.}
To enable a fair comparison, both FNO and HNO share the same architectural skeleton, following the framework of \citet{li2020b}. For time-dependent PDEs, the architecture consists of an input projection, three spectral convolution blocks with residual connections (implemented via \(1 \times 1\) convolutions on the flattened spatial dimensions) and GELU activations \citep{hendrycks2023}, followed by an output projection. Elliptic PDEs instead use four spectral convolution blocks. In the time-dependent case, the input comprises four channels: \([u_0, x, y, t]\), where \(u_0\) is the initial condition broadcast across all spatial locations, \(x\) and \(y\) are coordinate grids, and \(t\) is a scalar target time broadcast across the spatial grid. The network is trained to predict \(u(\cdot, t)\) for each of the \(N_t = 51\) target times independently, treating each \((u_0, t)\) pair as a separate training example. For elliptic PDEs, the input consists of three channels: \([f, x, y]\).

\begin{figure}[H]
\centering
\resizebox{0.7\columnwidth}{!}{%
\begin{tikzpicture}[
    io/.style={draw, rounded corners=2pt, fill=blue!12, minimum width=1.4cm, minimum height=0.5cm, align=center, font=\footnotesize},
    proj/.style={draw, rounded corners=2pt, fill=gray!20, minimum width=1.4cm, minimum height=0.45cm, align=center, font=\scriptsize},
    transform/.style={draw, rounded corners=2pt, fill=yellow!25, minimum width=1.3cm, minimum height=0.4cm, align=center, font=\scriptsize},
    weights/.style={draw, rounded corners=2pt, fill=orange!20, minimum width=1.3cm, minimum height=0.4cm, align=center, font=\scriptsize},
    operation/.style={draw, circle, fill=white, inner sep=1pt, font=\scriptsize},
    act/.style={draw, rounded corners=2pt, fill=green!15, minimum width=0.8cm, minimum height=0.35cm, align=center, font=\scriptsize},
    block/.style={draw, densely dashed, rounded corners=3pt, fill=blue!3, inner sep=4pt},
    corner/.style={draw, rounded corners=1pt, fill=red!8, minimum width=0.9cm, minimum height=0.35cm, align=center, font=\tiny},
    arrow/.style={-{Stealth[length=1.5mm]}, thick},
    smallarrow/.style={-{Stealth[length=1mm]}, semithick},
]
\node[io] (input) {$u_0, x, y, t$};
\node[proj, below=0.35cm of input] (lift) {$P: \mathbb{R}^{d_{in}} \to \mathbb{R}^{d_v}$};
\node[transform, below=0.4cm of lift] (ht) {$\mathcal{H}$};
\node[weights, below=0.25cm of ht] (W) {$W^{(i)}\ (\mathrm{real})$};
\node[transform, below=0.25cm of W] (iht) {$\mathcal{H}^{-1}$};
\node[proj, right=0.6cm of W, minimum width=0.7cm] (local) {\tiny $1{\times}1$};
\node[operation, below=0.25cm of iht] (add) {$+$};
\node[act, below=0.2cm of add] (gelu) {GELU};
\begin{scope}[on background layer]
\node[block, fit=(ht)(W)(iht)(local)(add)(gelu)] (specblock) {};
\end{scope}
\node[proj, below=0.4cm of gelu] (out) {$Q: \mathbb{R}^{d_v} \to \mathbb{R}^{d_{out}}$};
\node[io, below=0.35cm of out, fill=green!12] (output) {$u(x,y,t)$};
\draw[arrow] (input) -- (lift);
\draw[arrow] (lift) -- (ht);
\draw[arrow] (ht) -- (W);
\draw[arrow] (W) -- (iht);
\draw[arrow] (iht) -- (add);
\draw[arrow] (add) -- (gelu);
\draw[arrow] (gelu) -- (out);
\draw[arrow] (out) -- (output);
\draw[smallarrow] (lift.east) -- ++(0.15,0) |- (local.west);
\draw[smallarrow] (local.south) |- (add.east);
\node[right=1.6cm of W, font=\footnotesize\bfseries] (title) {Hartley Spectral Conv};
\node[transform, below=0.25cm of title, minimum width=1.5cm] (Hin) {$H = \mathcal{H}[v]$};
\node[corner, below=0.45cm of Hin, xshift=-0.5cm] (c1) {$[:m,:m]$};
\node[corner, right=0.08cm of c1] (c2) {$[:m,-m:]$};
\node[corner, below=0.08cm of c1] (c3) {$[-m:,:m]$};
\node[corner, below=0.08cm of c2] (c4) {$[-m:,-m:]$};
\node[below=0.3cm of $(c3.south)!0.5!(c4.south)$, font=\scriptsize, text width=2.3cm, align=center] (mult) {$\hat{H}^{(i)} = W^{(i)}\!\cdot\! H^{(i)}$\\(one real weight / corner)};
\node[transform, below=0.25cm of mult, minimum width=1.5cm] (Hout) {$\hat{H}_\mathrm{out}$};
\node[transform, below=0.2cm of Hout, minimum width=1.5cm] (vout) {$v_\mathrm{out} = \mathcal{H}^{-1}[\hat{H}]$};
\draw[smallarrow] (Hin.south) -- ++(0,-0.14) -| (c1.north);
\draw[smallarrow] (Hin.south) -- ++(0,-0.14) -| (c2.north);
\draw[smallarrow] (c3.south) -- ($(mult.north)+(-0.3,0.1)$);
\draw[smallarrow] (c4.south) -- ($(mult.north)+(0.3,0.1)$);
\draw[smallarrow] (mult.south) -- (Hout.north);
\draw[smallarrow] (Hout) -- (vout);
\begin{scope}[on background layer]
\node[block, fill=orange!5, fit=(title)(Hin)(c1)(c2)(c3)(c4)(mult)(Hout)(vout), inner sep=5pt] (detailblock) {};
\end{scope}
\end{tikzpicture}
}%
\caption{Hartley Neural Operator architecture. \textbf{Left:} Network structure with input projection, spectral convolution blocks (3 for time-dependent, 4 for elliptic PDEs), residual $1{\times}1$ bypass, and output projection. \textbf{Right:} the Hartley spectral convolution keeps the low-frequency corners of the full real Hartley spectrum---four in two dimensions, eight octants in three---and applies one real weight matrix $W^{(i)}$ per corner, the real-valued mirror of FNO's complex per-mode multiplier. There is no even/odd decomposition and no $k$/$-k$ coupling.}
\label{fig:hno_arch}
\end{figure}

\textbf{FNO Spectral Convolution.}
FNO computes its spectral convolution using the real fast Fourier transform (\texttt{rFFTn}), a variant that takes advantage of conjugate symmetry in real-valued inputs to retain only the half of the spectrum corresponding to positive frequencies \citep{oppenheim1999}. The resulting operation can be written as:
\begin{equation}
    \text{FNO}: \quad y = \mathcal{F}^{-1}\!\left[R \cdot \mathcal{F}[x]\right]
    \label{eq:fno_conv}
\end{equation}
In this formulation, the tensor $R \in \mathbb{C}^{d_v \times d_v \times k_\mathrm{max}}$ is applied to the truncated positive-frequency half-space, and the full spectrum is subsequently recovered by the inverse transform, which exploits conjugate symmetry to fill in the missing negative-frequency components automatically.

\textbf{HNO Spectral Convolution.}
The Hartley transform of a real field is itself real, yet it lacks the conjugate symmetry found in Fourier transforms: the coefficients $H_k$ and $H_{-k}$ are independent. Consequently, HNO does not reduce any axis, and it preserves the low-frequency corners of the full spectrum—four such corners in two dimensions and eight in three. Whereas FNO assigns one complex multiplier to each retained mode, HNO serves as its exact real counterpart, assigning one real multiplier per retained mode:
\begin{equation}
    \text{HNO}: \quad y = \mathcal{H}^{-1}\!\left[W \cdot \mathcal{H}[x]\right],
    \qquad W \in \mathbb{R}^{d_v \times d_v \times k_\mathrm{max}},
    \label{eq:hno_conv}
\end{equation}
a single real weight matrix $W^{(i)}$ on each retained corner, with no even/odd decomposition and no $k$/$-k$ coupling term. By Theorem~\ref{thm:elliptic_greens} this is the faithful realization of the Hartley convolution theorem in the symmetric case: because the elliptic Green's function is real and even, the discrete Hartley transform diagonalizes it, and each mode is represented exactly by a single real multiplier. The coupling term that a general—asymmetric, phase-carrying—convolution would require is deliberately omitted; HNO cannot represent operators with phase content (Appendix~\ref{app:phase}). That limitation is precisely the inductive bias toward real, self-adjoint operators that this comparison isolates.

A complex weight contains two real parameters, whereas a real weight contains only one. Moreover, because HNO retains twice as many spectral corners as FNO—using the full spectrum rather than the half spectrum produced by \texttt{rFFT}—the two operators end up with the \emph{same} number of real parameters at \emph{equal} width. Specifically, in two dimensions, FNO's two complex corners and HNO's four real corners each yield $4 d_v^2 m^2$ real weights; in three dimensions, both contribute $8 d_v^2 m^3$. Consequently, we set $d_v^\mathrm{FNO} = d_v^\mathrm{HNO}$, ensuring the comparison is iso-parametric, with the only difference being the choice of spectral basis.

\subsection{Training Protocol}
\label{sec:hpo}

To ensure that the comparison isolates the effect of the spectral basis rather than differences in tuning, both operators are trained under identical configurations. The optimizer is Adam with a learning rate of \(10^{-3}\), weight decay of \(10^{-4}\), gradient clipping at norm \(1.0\), and a step schedule that halves the learning rate at each quarter of training. Both models also share the same channel width and the same number of retained modes per axis; because HNO is iso-parametric with FNO at equal width (Equation~\ref{eq:hno_conv}), this matching equalizes not only width but also the number of real trainable parameters. The loss function is the standard relative-\(L^2\) objective, normalized per sample, which avoids the constant-output collapse that an unnormalized MSE can induce when targets are small-amplitude and zero-mean. No per-method hyperparameter search is performed: the clean real-diagonal HNO trains stably under FNO's default settings, so using matched hyperparameters is both a fair and sufficient protocol.

\subsection{Evaluation Metrics}

Following standard practice in neural operator evaluation \citep{li2020b,lu2022}, we report the relative $L^2$ error averaged over test samples:
\begin{equation}
    \text{Rel } L^2 = \frac{1}{N_\mathrm{test}} \sum_{i=1}^{N_\mathrm{test}} \frac{\|u^{(i)}_\theta - u^{(i)}_\mathrm{true}\|_2}{\|u^{(i)}_\mathrm{true}\|_2}
    \label{eq:rel_l2}
\end{equation}
To assess spatial derivative accuracy---critical for computing physical quantities such as fluxes, stresses, and forces \citep{kovachki2023}---we also compute gradient error:
\begin{equation}
    \text{Grad Error} = \frac{\|\nabla u_\theta - \nabla u_\mathrm{true}\|_2}{\|\nabla u_\mathrm{true}\|_2}
    \label{eq:grad_error}
\end{equation}
where gradients are computed via central finite differences on the output grid. We additionally report per-sample error distributions to assess consistency beyond mean performance.

\subsection{Computational Setup}

All experiments were conducted on a single NVIDIA A100 GPU with 40GB of memory, accessed through Google Colab Pro. Each training run took between 20 and -40 minutes for time-dependent PDEs, while elliptic PDEs required only 3 to 5 minutes per method.

\section{Results}

We compare HNO and FNO across the PDE suite, using three families of initial conditions and both boundary-condition settings, with the two operators kept iso-parametric and trained identically. Errors are reported as relative $L^2$ error, and the main observed pattern is the predicted split between elliptic and time-dependent problems, consistent with the operator-symmetry theory in Appendix~\ref{app:theory}.

\subsection{FNO vs HNO: Overall Performance}

\begin{figure}[H]
\centering
\includegraphics[width=0.8\columnwidth]{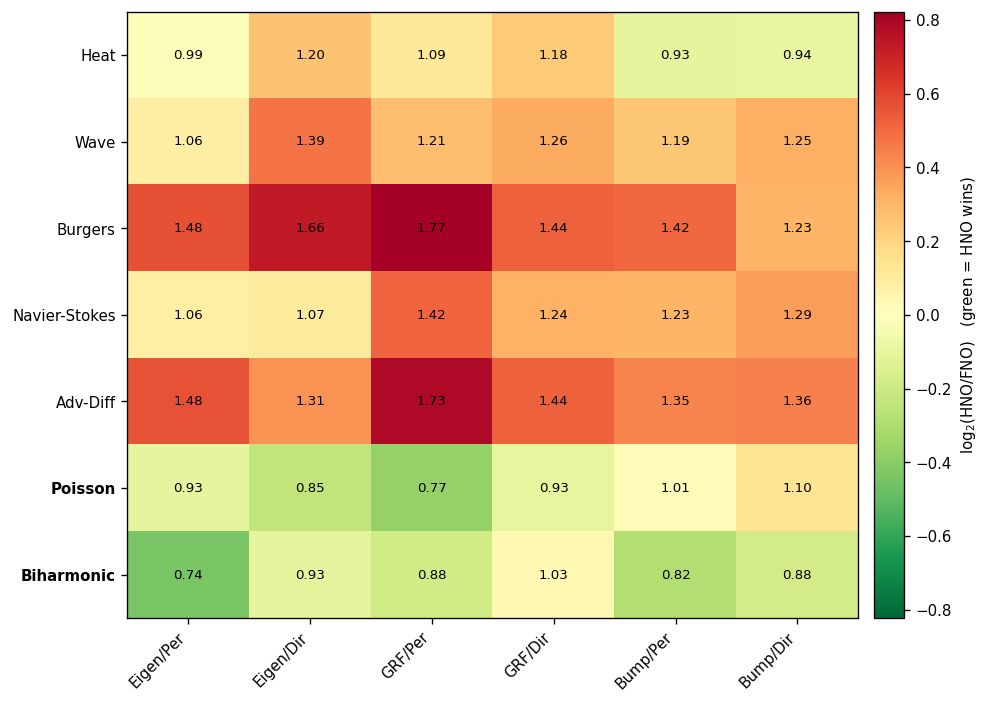}
\caption{HNO/FNO relative-$L^2$ ratio across PDEs (rows) and initial-condition/boundary combinations (columns). Green indicates HNO wins, red FNO; bold rows are the elliptic operators. The elliptic rows are green and the time-dependent rows red, with the FNO margin increasing from heat (phaseless) through wave to the transport-dominated operators. This figure reports the full relative-$L^2$ comparison.}
\label{fig:heatmap}
\end{figure}

The results split cleanly along operator symmetry (Figure~\ref{fig:heatmap}), varying monotonically with phase content rather than dichotomously.

\paragraph{Elliptic PDEs: HNO favored.} On Poisson and biharmonic operators—both self-adjoint with real, symmetric Green's functions—HNO achieves lower error, with the largest advantage appearing on the biharmonic case, the most strongly self-adjoint operator in the suite. This result has the strongest theoretical foundation: Corollary~\ref{cor:symmetric_conv} shows that a single real multiplier per mode exactly reproduces a symmetric kernel, so HNO's hypothesis class includes the target, whereas FNO must force its imaginary parameters to zero, as established in Theorem~\ref{thm:complexity}. The advantage is more pronounced under periodic boundaries, which keep the operator exactly Hartley-diagonal, than under Dirichlet conditions.

\paragraph{Time-dependent PDEs: FNO favored, monotone in phase.} In comparisons involving the heat, wave, advection-diffusion, Burgers, and Navier-Stokes equations, FNO is consistently preferred, and its margin of advantage expands with the operator's phase content (Corollary~\ref{cor:phase_ordering}). The largest FNO gains appear for transport-dominated operators such as advection-diffusion and Burgers; the wave equation, though oscillatory, lacks damping and sits in an intermediate position. The heat equation, whose propagator $e^{-\nu|k|^2 t}$ is real and carries no phase, comes closest to a statistical tie, with HNO occasionally taking the lead on smooth inputs. This ordering itself supports the proposed mechanism: FNO's relative strength reflects the amount of phase carried by the propagator rather than the PDE's conventional classification.

\paragraph{Initial conditions modulate magnitude, not sign.} In the eigenfunction, GRF, and Gaussian-bump families, the sign of each cell remains stable across cases, though the IC family modulates how pronounced the gap is. Because the smooth Gaussian bump produces near-radial, low-rank solutions that both bases handle effectively, its ratios are closest to unity and therefore offer the least diagnostic value.

The aggregate ratios in Figure~\ref{fig:heatmap} quantify how much each operator wins, leaving unaddressed the question of \emph{where} that advantage occurs. By resolving the error spatially, Figure~\ref{fig:spatial_4panel} demonstrates that the two regimes differ in kind, not just in magnitude. In the elliptic and phaseless cases, HNO's superiority is both domain-filling and low-amplitude: it is uniformly—though mildly—closer to truth across nearly the entire field (biharmonic, for instance, shows $0.82\times$ at $72\%$ of pixels, while heat reaches $0.93\times$ at $100\%$). Consequently, a ratio close to one in these settings does not signal a near-tie but rather a modest edge applied throughout the whole domain, which the aggregate figure understates. The transport-dominated problems exhibit the opposite tendency: FNO's advantage is concentrated on the moving structures—the steepened fronts in Burgers and the advective streaks in advection–diffusion—whereas the smooth surrounding regions stay close to parity. This spatial pattern embodies the phase-content thesis: the real Hartley basis naturally matches the diffuse, phaseless structure of self-adjoint operators, yet falls short precisely on the localized, phase-carrying features that complex Fourier modes are able to represent.

\begin{figure}[t]
  \centering
  \includegraphics[width=\linewidth]{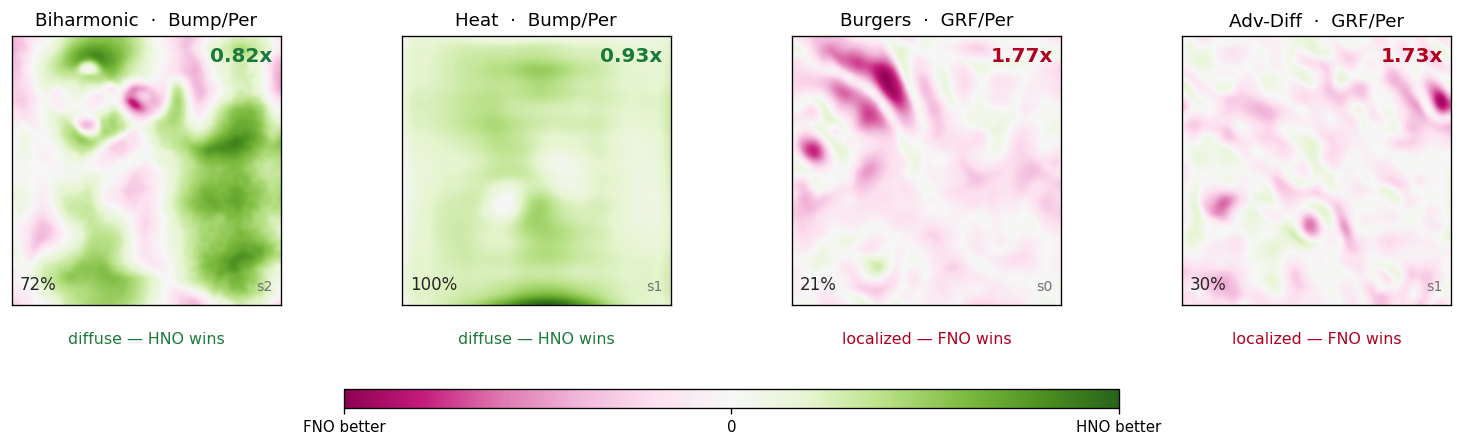}
  \caption{\textbf{Where each operator wins is structured, not uniform.}
  Per-pixel error advantage $|u_{\mathrm{FNO}}-u|-|u_{\mathrm{HNO}}-u|$ for four
  representative cells (green: HNO closer to truth; pink: FNO closer). Top-right:
  relative-$L^2$ ratio (HNO/FNO; ${<}1$ favors HNO). Bottom-left: fraction of
  pixels at which HNO is closer. On the elliptic and phaseless cases (left) HNO's
  advantage is \emph{diffuse}, spread across the domain at low amplitude. On the
  transport-dominated cases (right) FNO's advantage \emph{localizes} on the
  advective structures—Burgers' steepened fronts and the advection–diffusion
  streaks—precisely the phase-carrying features a real-diagonal operator cannot
  represent. Panels show the best-of-seed run per cell; the full $7\times6$ grid
  is in Appendix~\ref{app:spatial_grid}.}
  \label{fig:spatial_4panel}
\end{figure}

\section{Discussion}

Our experiments point to a single organizing principle: the symmetry of the solution operator dictates the optimal spectral basis. Specifically, real, self-adjoint, phaseless operators are best served by the real Hartley basis, whereas operators that introduce phase are best matched by the complex Fourier basis. The following subsections detail the mechanism underlying each side of this division, the associated computational costs, and the links to other real-basis operators.

\subsection{Green's Function Alignment Explains Elliptic Dominance}

HNO's advantage on elliptic PDEs stems from the symmetry of Green's functions. For self-adjoint elliptic operators, the Green's functions are real and symmetric, with $\hat{G}(k) \in \mathbb{R}$ and $\mathcal{H}\{G\}(k) = \hat{G}(k)$ (see Theorem~\ref{thm:elliptic_greens} in Appendix~\ref{app:theory}). Because symmetric kernels reduce the Hartley convolution to elementwise multiplication (Corollary~\ref{cor:symmetric_conv}), HNO mirrors FNO's structure while operating entirely in real arithmetic. In contrast, FNO must learn that $\mathrm{Im}\{W\} = 0$ during training, which doubles its effective search space from $M$ to $2M$ dimensions (Theorem~\ref{thm:complexity}).

\subsection{Phase Content Explains the Time-Dependent Ordering}

The graded nature of FNO's advantage on time-dependent PDEs, rather than a uniform one, is explained by the complement of the elliptic argument. Only phaseless operators—those whose symbol is real and even—can be realized exactly by a real Hartley diagonal multiplier (Proposition~\ref{prop:phaseless}). Phase, however, is generically introduced by time evolution: the wave propagator oscillates, and advective transport yields a purely imaginary symbol, $e^{-ic\cdot k\,t}$. Because HNO cannot represent such operators, FNO prevails, and the margin grows with the amount of phase carried by the propagator (Corollary~\ref{cor:phase_ordering}). The heat equation sits at an informative borderline, since its propagator $e^{-\nu|k|^2 t}$ is real and even—phaseless, as in an elliptic solve—making the two bases most similar there and even allowing HNO to pull slightly ahead on smooth inputs. Variations in the initial-condition family change the size of each gap but never its sign, because the phase of the symbol belongs to the operator, not to the data; this contrasts with an earlier explanation that tied the experimental outcomes to the spectral content of the initial condition.

\subsection{Computational Cost}

The clean real-diagonal layer requires only a single real multiply per retained corner, avoiding both even/odd pairing and negative-frequency gathering, so its per-mode arithmetic is lighter than FNO's complex multiply. The residual wall-clock overhead arises because the DHT is computed as $\mathrm{Re}\{\mathcal{F}\}-\mathrm{Im}\{\mathcal{F}\}$ using a full complex \texttt{fftn}, rather than exploiting the \texttt{rfft} symmetry that FNO benefits from. This inefficiency stems from the emulated-DHT backend, not from the Hartley transform itself: a native fast Hartley transform using purely real arithmetic would eliminate it, and our radix-4 FHT benchmarks (Appendix~\ref{app:cost}, Table~\ref{tab:fht_benchmark}) indicate substantial headroom, showing a 40--131$\times$ transform-level speedup over radix-2. Relative to the accuracy differences that the basis choice produces, this overhead remains modest.

\subsection{Related Real-Basis Operators}

Recent work on Walsh-Hadamard transforms for discontinuous coefficients~\citep{cavallazzi2025whno} and convolutional neural operators~\citep{raonic2023cno} supports our thesis that spectral basis selection is an active design choice. Independent evidence from a different domain comes from the HNOSeg line of work~\citep{wong2023hartleymha,wong2025hnoseg}: in 3D medical image segmentation, Hartley-based operators outperform Fourier-based ones in accuracy while achieving extreme parameter efficiency ($<$35k parameters vs.\ $>$16M for CNNs/transformers). Their finding that real-valued frequency domains enable stacked nonlinear operations (impossible with complex spectra) complements our own result that the real-valued spectral diagonal exactly matches—and is exactly limited to—real, symmetric solution operators.

\subsection{Limitations}

All experiments use square domains at $64 \times 64$ resolution with periodic and homogeneous Dirichlet boundary conditions, which may not capture all high-resolution phenomena. Our Burgers and Navier-Stokes viscosities prevent true shock formation, and our HNO implementation relies on \texttt{torch.fft} rather than a native FHT kernel, leaving efficiency gains unrealized.

\section{Conclusion}

We introduced the Hartley Neural Operator (HNO) as the exact real-valued counterpart of FNO: it applies one real multiplier per retained mode and is iso-parametric at equal width. Using HNO, we posed a focused question—when does a real spectral basis outperform a complex one? Our answer takes the form of a rule rather than a ranking.

\begin{enumerate}
    \item \textbf{Elliptic operators favor HNO.} Self-adjoint elliptic operators possess real, symmetric Green's functions, which the real Hartley multiplier diagonalizes exactly. As a result, HNO includes the target operator directly within its hypothesis class, whereas FNO must instead learn to enforce that the imaginary component vanishes. This advantage is most pronounced for the biharmonic operator, the most self-adjoint case considered.
    \item \textbf{Time-dependent operators favor FNO, monotonically in phase.} A real diagonal cannot represent phase, so FNO wins on operators that oscillate or transport, with its margin growing alongside phase content—from the borderline phaseless heat equation, through the wave equation, to the transport-dominated advective and nonlinear PDEs.
    \item \textbf{The basis is a property of the operator, not the data.} The magnitude of each gap is modulated by the initial-condition family and boundary condition, but its sign remains unaffected; what ultimately determines the sign is the symmetry of the solution operator's symbol.
\end{enumerate}

The guiding principle is straightforward: *match the basis to the symmetry of the operator*. For real, self-adjoint, phaseless problems—such as elliptic solves and diffusion—the real Hartley basis is appropriate, whereas oscillatory and advective problems are better served by the complex Fourier basis. This perspective reframes spectral-operator design not as a quest for a universal transform, but as a matter of selecting a basis whose structure aligns with the underlying physics. Several avenues naturally emerge from this view: developing a native Fast Hartley Transform kernel to eliminate the overhead of an emulated DHT; creating an adaptive operator that infers the basis from detected operator symmetry; and conducting a broader comparison of both spectral operators against non-spectral baselines like DeepONet~\citep{lu2021} and convolutional neural operators~\citep{raonic2023cno}, thereby quantifying the spectral-versus-non-spectral gap alongside the real-versus-complex distinction explored here.

\bibliography{tmlr}

@article{li2020b,
  title={Fourier Neural Operator for Parametric Partial Differential Equations},
  author={Li, Zongyi and Kovachki, Nikola and Azizzadenesheli, Kamyar and Liu, Burigede and Bhattacharya, Kaushik and Stuart, Andrew and Anandkumar, Anima},
  journal={arXiv preprint arXiv:2010.08895},
  year={2020}
}

@inproceedings{li2020a,
  title={Fourier Neural Operator for Parametric Partial Differential Equations},
  author={Li, Zongyi and Kovachki, Nikola and Azizzadenesheli, Kamyar and Liu, Burig and Bhattacharya, Kirthevasan and Stuart, Andrew and Anandkumar, Anima},
  booktitle={Proceedings of the 37th International Conference on Machine Learning (ICML)},
  year={2020}
}

@article{lu2021,
  title={Learning nonlinear operators via {DeepONet} based on the universal approximation theorem of operators},
  author={Lu, Lu and Jin, Pengzhan and Pang, Guofei and Zhang, Zhongqiang and Karniadakis, George Em},
  journal={Nature Machine Intelligence},
  volume={3},
  number={3},
  pages={218--229},
  year={2021},
  publisher={Nature Publishing Group}
}

@article{kovachki2023,
  title={Neural operator: Learning maps between function spaces with applications to {PDEs}},
  author={Kovachki, Nikola and Li, Zongyi and Liu, Burigede and Azizzadenesheli, Kamyar and Bhattacharya, Kaushik and Stuart, Andrew and Anandkumar, Anima},
  journal={Journal of Machine Learning Research},
  volume={24},
  number={89},
  pages={1--97},
  year={2023}
}

@article{lu2022,
  title={A comprehensive and fair comparison of two neural operators (with practical extensions) based on {FAIR} data},
  author={Lu, Lu and Meng, Xuhui and Cai, Shengze and Mao, Zhiping and Goswami, Somdatta and Zhang, Zhongqiang and Karniadakis, George Em},
  journal={Computer Methods in Applied Mechanics and Engineering},
  volume={393},
  pages={114778},
  year={2022},
  publisher={Elsevier}
}

@article{chen1995,
  title={Universal approximation to nonlinear operators by neural networks with arbitrary activation functions and its application to dynamical systems},
  author={Chen, Tianping and Chen, Hong},
  journal={IEEE Transactions on Neural Networks},
  volume={6},
  number={4},
  pages={911--917},
  year={1995},
  publisher={IEEE}
}

@article{tripura2022,
  title={Wavelet neural operator for solving parametric partial differential equations in computational mechanics problems},
  author={Tripura, Tapas and Chakraborty, Souvik},
  journal={Computer Methods in Applied Mechanics and Engineering},
  volume={404},
  pages={115783},
  year={2022},
  publisher={Elsevier}
}

@inproceedings{gupta2021,
  title={Multiwavelet-based Operator Learning for Differential Equations},
  author={Gupta, Gaurav and Xiao, Xiongye and Bogdan, Paul},
  booktitle={Advances in Neural Information Processing Systems},
  editor={A. Beygelzimer and Y. Dauphin and P. Liang and J. Wortman Vaughan},
  year={2021},
  url={https://openreview.net/forum?id=LZDiWaC9CGL}
}

@article{hartley1942,
  title={A More Symmetrical {Fourier} Analysis Applied to Transmission Problems},
  author={Hartley, R.V.L.},
  journal={Proceedings of the IRE},
  volume={30},
  number={3},
  pages={144--150},
  year={1942},
  doi={10.1109/JRPROC.1942.234333}
}

@article{bracewell1983,
  title={Discrete {Hartley} Transform},
  author={Bracewell, R. N.},
  journal={Journal of the Optical Society of America},
  volume={73},
  number={12},
  pages={1832--1835},
  year={1983},
  doi={10.1364/JOSA.73.001832}
}

@book{bracewell1984,
  title={The {Hartley} Transform},
  author={Bracewell, R. N.},
  publisher={Oxford University Press},
  year={1984}
}

@article{ahmed1974,
  title={Discrete Cosine Transform},
  author={Ahmed, N. and Natarajan, T. and Rao, K. R.},
  journal={IEEE Transactions on Computers},
  volume={100},
  number={1},
  pages={90--93},
  year={1974}
}

@article{daubechies1988,
  title={Orthonormal bases of compactly supported wavelets},
  author={Daubechies, Ingrid},
  journal={Communications on Pure and Applied Mathematics},
  volume={41},
  number={7},
  pages={909--996},
  year={1988}
}

@book{akansu1992,
  title={Multiresolution Signal Decomposition: Transforms, Subbands, and Wavelets},
  author={Akansu, Ali N and Haddad, Richard A},
  year={1992},
  publisher={Academic Press}
}

@book{kaiser1994,
  title={A Friendly Guide to Wavelets},
  author={Kaiser, Gerald},
  year={1994},
  publisher={Birkh\"{a}user}
}

@article{wallace1991,
  title={The {JPEG} still picture compression standard},
  author={Wallace, Gregory K},
  journal={Communications of the ACM},
  volume={34},
  number={4},
  pages={30--44},
  year={1991}
}

@book{oppenheim1999,
  title={Discrete-Time Signal Processing},
  author={Oppenheim, Alan V. and Schafer, Ronald W.},
  edition={2nd},
  publisher={Prentice Hall},
  year={1999}
}

@book{leveque2007,
  title={Finite Difference Methods for Ordinary and Partial Differential Equations},
  author={LeVeque, Randall J.},
  year={2007},
  publisher={SIAM}
}

@book{johnson2009,
  title={Numerical Solution of Partial Differential Equations by the Finite Element Method},
  author={Johnson, Claes},
  year={2009},
  publisher={Dover Publications}
}

@book{canuto2006,
  title={Spectral Methods: Fundamentals in Single Domains},
  author={Canuto, Claudio and Hussaini, M. Yousuff and Quarteroni, Alfio and Zang, Thomas A.},
  year={2006},
  publisher={Springer}
}

@book{matern1986,
  title={Spatial Variation},
  author={Mat\'{e}rn, Bertil},
  year={1986},
  publisher={Springer New York},
  doi={10.1007/978-1-4615-7892-5}
}

@article{gneiting2010,
  title={Mat\'{e}rn Cross-Covariance Functions for Multivariate Random Fields},
  author={Gneiting, Tilmann and Kleiber, William and Schlather, Martin},
  journal={Journal of the American Statistical Association},
  volume={105},
  number={491},
  pages={1167--1177},
  year={2010},
  doi={10.1198/jasa.2010.tm09420}
}

@article{paszke2019,
  title={{PyTorch}: An Imperative Style, High-Performance Deep Learning Library},
  author={Paszke, Adam and Gross, Sam and Massa, Francisco and Lerer, Adam and Bradbury, James and Chanan, Gregory and Killeen, Trevor and Lin, Zeming and Gimelshein, Natalia and Antiga, Luca and others},
  journal={Advances in Neural Information Processing Systems},
  volume={32},
  year={2019}
}

@misc{hendrycks2023,
  title={Gaussian Error Linear Units ({GELUs})},
  author={Hendrycks, Dan and Gimpel, Kevin},
  year={2023},
  eprint={1606.08415},
  archivePrefix={arXiv},
  primaryClass={cs.LG}
}

@misc{pathak2022,
  title={{FourCastNet}: A Global Data-driven High-resolution Weather Model using Adaptive {Fourier} Neural Operators},
  author={Pathak, Jaideep and Subramanian, Shashank and Harrington, Peter and Raja, Sanjeev and Chattopadhyay, Ashesh and Mardani, Morteza and Kurth, Thorsten and Hall, David and Li, Zongyi and Azizzadenesheli, Kamyar and Hassanzadeh, Pedram and Kashinath, Karthik and Anandkumar, Animashree},
  year={2022},
  eprint={2202.11214},
  archivePrefix={arXiv},
  primaryClass={physics.ao-ph}
}

@article{wang2020,
  title={Understanding and Mitigating Gradient Flow Pathologies in Physics-informed Neural Networks},
  author={Wang, Sifan and Teng, Yibo and Perdikaris, Paris},
  journal={SIAM Journal on Scientific Computing},
  volume={42},
  number={5},
  pages={A2927--A2950},
  year={2020},
  publisher={SIAM}
}

@article{cavallazzi2025whno,
  title={Walsh-Hadamard Neural Operators for Solving {PDEs} with Discontinuous Coefficients},
  author={Cavallazzi, Giorgio M. and P\'{e}rez Cuadrado, Miguel and Pinelli, Alfredo},
  journal={arXiv preprint arXiv:2511.07347},
  year={2025}
}

@article{raonic2023cno,
  author = {B.~Raoni\'c},
  title = {Convolutional neural operators for robust and accurate learning of PDEs},
  journal = {NeurIPS},
  year = {2023},
}

@article{wong2025hnoseg,
  title={{HNOSeg-XS}: Extremely Small {Hartley} Neural Operator for Efficient and Resolution-Robust {3D} Image Segmentation},
  author={Wong, Ken C. L. and Wang, Hongzhi and Syeda-Mahmood, Tanveer},
  journal={IEEE Transactions on Medical Imaging},
  year={2025},
  doi={10.1109/TMI.2025.3588458}
}

@inproceedings{wong2023hartleymha,
  title={{HartleyMHA}: Self-Attention in Frequency Domain for Resolution-Robust and Parameter-Efficient {3D} Image Segmentation},
  author={Wong, Ken C. L. and Wang, Hongzhi and Syeda-Mahmood, Tanveer},
  booktitle={International Conference on Medical Image Computing and Computer-Assisted Intervention (MICCAI)},
  year={2023}
}
\bibliographystyle{tmlr}

\appendix
\section{Notation and Definitions}
\label{app:notation}

We establish notation for the spectral transforms used throughout this work. All definitions below apply to real-valued functions and signals.

\begin{definition}[Fourier Transform]
\label{def:fourier}
For $f \in L^1(\mathbb{R}^n) \cap L^2(\mathbb{R}^n)$, the continuous Fourier transform is
\begin{equation}
    \mathcal{F}\{f\}(k) = \int_{\mathbb{R}^n} f(x)\, e^{-ik \cdot x}\, dx.
    \label{eq:ft}
\end{equation}
Given a sequence $f[n]$ of length $N$, the Discrete Fourier Transform (DFT) is
\begin{equation}
    X_k = \sum_{n=0}^{N-1} f[n]\, e^{-i\, 2\pi kn/N} = \sum_{n=0}^{N-1} f[n] \left[\cos\!\left(\frac{2\pi kn}{N}\right) - i\sin\!\left(\frac{2\pi kn}{N}\right)\right].
    \label{eq:dft}
\end{equation}
\end{definition}

\begin{definition}[Hartley Transform]
\label{def:hartley}
Define the cas (cosine-and-sine) kernel as
\begin{equation}
    \mathrm{cas}(\theta) = \cos(\theta) + \sin(\theta) = \sqrt{2}\,\sin\!\left(\theta + \frac{\pi}{4}\right).
    \label{eq:cas}
\end{equation}
The continuous Hartley transform of $f$ is
\begin{equation}
    \mathcal{H}\{f\}(k) = \int_{\mathbb{R}^n} f(x)\, \mathrm{cas}(k \cdot x)\, dx.
    \label{eq:cht}
\end{equation}
The Hartley transform is involutory: the inverse has the same form,
\begin{equation}
    f(x) = \int_{\mathbb{R}^n} \mathcal{H}\{f\}(k)\, \mathrm{cas}(k \cdot x)\, dk.
    \label{eq:cht_inv}
\end{equation}
Given a sequence $f[n]$ of length $N$, the Discrete Hartley Transform (DHT) \citep{bracewell1983} is
\begin{equation}
    H_k = \sum_{n=0}^{N-1} f[n] \cdot \mathrm{cas}\!\left(\frac{2\pi kn}{N}\right) = \sum_{n=0}^{N-1} f[n] \left[\cos\!\left(\frac{2\pi kn}{N}\right) + \sin\!\left(\frac{2\pi kn}{N}\right)\right].
    \label{eq:dht_app}
\end{equation}
The DHT is a linear map $\mathbb{R}^N \to \mathbb{R}^N$. Its inverse is obtained by applying the DHT again and normalizing by $1/N$, so the DHT is self-inverse up to a scale factor.
\end{definition}

\begin{definition}[Transform Relationship]
\label{def:transform_rel}
For real-valued $f$, the two transforms are related by
\begin{equation}
    \mathcal{H}\{f\}(k) = \mathrm{Re}\{\mathcal{F}\{f\}(k)\} - \mathrm{Im}\{\mathcal{F}\{f\}(k)\}.
    \label{eq:rel}
\end{equation}
This identity is the basis for our GPU-accelerated implementation: we compute the DHT via \texttt{torch.fft.fftn} and extract $\mathrm{Re} - \mathrm{Im}$.
\end{definition}

\paragraph{Hartley convolution.}
For the cyclic convolution of vectors $x$ and $y$ producing $z$ (all length $N$), let $X$, $Y$, and $Z$ denote their respective DHTs. Then
\begin{align}
    Z_k &= \frac{X_k(Y_k + Y_{N-k}) + X_{N-k}(Y_k - Y_{N-k})}{2}, \label{eq:hconv1} \\
    Z_{N-k} &= \frac{X_{N-k}(Y_k + Y_{N-k}) - X_k(Y_k - Y_{N-k})}{2}. \label{eq:hconv2}
\end{align}

\section{Comparison with Wavelet and Cosine Transforms}
\label{app:transforms}

Integral transforms, including the wavelet transform \citep{daubechies1988}, the Hartley transform, and the discrete cosine transform (DCT) \citep{ahmed1974}, enable data analysis and representation in transformed domains. In the wavelet case, basis functions are generated by dilating and translating a “mother wavelet” \citep{akansu1992}, whereas the Hartley transform relies on cas functions (Definition~\ref{def:hartley}). The DCT, built on cosine functions, excels at concentrating signal energy into a small number of coefficients—a trait exploited in JPEG compression \citep{wallace1991}—yet it suffers from blocking artifacts at discontinuities and, unlike the Hartley transform, is not self-inverse.

Wavelet and Hartley transforms both involve integrating the original signal against a kernel: the wavelet function \citep{kaiser1994} for wavelets and the cas function for Hartley. Wavelets are particularly adept at capturing transient events and multiscale phenomena, but they demand careful selection of a mother wavelet, exhibit shift variance, and cannot be readily operated upon in the frequency domain. In contrast, the Hartley transform’s real-valued formulation, which couples sinusoidal and cosinusoidal components, yields representations well suited to smooth, periodic signals whose spectral content is globally distributed.

Empirically, the Multiwavelet Transform operator (MWT) \citep{tripura2022,gupta2021} consistently underperformed both FNO and HNO by factors of 2–4$\times$ across all six PDEs tested. This outcome supports the view that smooth, periodic solutions with global frequency structure are better captured by spectral bases than by compactly supported wavelets.

\section{The Fast Hartley Transform}
\label{app:fht}

The Fast Hartley Transform (FHT) computes the DHT in $\mathcal{O}(N \log N)$ operations using divide-and-conquer, analogous to the Cooley-Tukey FFT. The radix-2 algorithm proceeds as follows:

\begin{enumerate}
    \item \textbf{Decompose:} Split the input into even- and odd-indexed elements:
    \begin{equation}
        x_\mathrm{even}[n] = x[2n], \quad x_\mathrm{odd}[n] = x[2n+1].
        \label{eq:fht_split}
    \end{equation}
    \item \textbf{Recurse:} Apply the DHT to these smaller sequences:
    \begin{equation}
        \mathcal{H}(x_\mathrm{even})[k], \quad \mathcal{H}(x_\mathrm{odd})[k], \quad k = 0, \ldots, \tfrac{N}{2}-1.
        \label{eq:fht_recurse}
    \end{equation}
    \item \textbf{Combine:} Form the full transform using the cas kernel:
    \begin{align}
        \mathcal{H}(x)[k] &= \mathcal{H}(x_\mathrm{even})[k] + \mathrm{cas}\!\left(\frac{2\pi k}{N}\right) \cdot \mathcal{H}(x_\mathrm{odd})[k], \label{eq:fht_combine1} \\
        \mathcal{H}(x)[k + N/2] &= \mathcal{H}(x_\mathrm{even})[k] - \mathrm{cas}\!\left(\frac{2\pi k}{N}\right) \cdot \mathcal{H}(x_\mathrm{odd})[k], \label{eq:fht_combine2}
    \end{align}
    for $k = 0, \ldots, \tfrac{N}{2}-1$.
\end{enumerate}

Higher-radix variants (radix-4, split-radix) reduce the multiplicative constant by processing more elements per recursion level. The FHT retains all DHT properties---linearity, symmetry, Parseval's theorem---with reduced computational complexity suitable for real-time applications.

\paragraph{Implementation note.}
In our experiments, we do not implement a standalone FHT kernel. Instead, we compute the DHT via the relationship $H\{f\}(k) = \mathrm{Re}\{F\{f\}(k)\} - \mathrm{Im}\{F\{f\}(k)\}$ using \texttt{torch.fft.fftn}, which leverages cuFFT's highly optimized GPU implementation. This approach achieves identical numerical results to a native FHT with minimal overhead, and is the recommended implementation strategy for practitioners adopting HNO.

\section{Arithmetic Cost Comparison}
\label{app:cost}

We compare the arithmetic cost of na\"{i}ve (non-FFT) forward and inverse transforms.

\paragraph{Na\"{i}ve DFT (forward + inverse):}
Each $(k,n)$ pair requires 2 real multiplications and 2 real additions (one for the real part, one for imaginary), giving 4 flops per pair. Total: $C_\mathrm{DFT} = 2 \times 4N^2 = 8N^2$.

\paragraph{Na\"{i}ve DHT (forward + inverse):}
Each $(k,n)$ pair requires 1 real multiplication and 1 real addition, giving 2 flops per pair. Total: $C_\mathrm{Hartley} = 2 \times 2N^2 = 4N^2$.

\paragraph{Comparison:}
\begin{equation}
    \frac{C_\mathrm{Hartley}}{C_\mathrm{DFT}} = \frac{4N^2}{8N^2} = \frac{1}{2}.
    \label{eq:cost_ratio}
\end{equation}

For the fast algorithms ($\mathcal{O}(N \log N)$), the ratio depends on implementation details but remains favorable for the FHT due to purely real arithmetic. In practice, when computing the DHT via \texttt{torch.fft} as in our implementation, the overhead relative to a direct FFT call is minimal---a single elementwise $\mathrm{Re} - \mathrm{Im}$ operation on the FFT output.

\subsection{Radix-4 FHT SpectralConv3d Performance}

We benchmark a radix-4 Fast Hartley Transform implementation against a standard radix-2 FHT within the SpectralConv3d layer used in our neural operator architecture. The radix-4 variant processes four elements per recursion level rather than two, reducing recursion depth from $\log_2(N)$ to $\log_4(N) = \log_2(N)/2$.

\begin{table}[h]
\centering
\caption{Performance comparison: SpectralConv3d using radix-4 FHT vs.\ radix-2 FHT. The radix-4 implementation achieves 40--131$\times$ speedup with favorable scaling properties. FHT $\phi$ and IFHT $\phi$ denote the forward and inverse transform phase timings within the radix-4 implementation.}
\label{tab:fht_benchmark}
\begin{tabular}{lccccc}
\toprule
Input Size & Radix-4 $t$ (s) & Radix-2 $t$ (s) & Speedup & FHT $\phi$ (s) & IFHT $\phi$ (s) \\
\midrule
$16^3$ & $0.0079 \pm 0.0001$ & $0.3168 \pm 0.0078$ & 40.09$\times$ & 0.0037 & 0.0037 \\
$32^3$ & $0.0340 \pm 0.0020$ & $3.4990 \pm 0.5385$ & 102.93$\times$ & 0.0164 & 0.0165 \\
$64^3$ & $0.2516 \pm 0.0106$ & $32.9607 \pm 0.7072$ & 131.03$\times$ & 0.1219 & 0.1221 \\
\bottomrule
\end{tabular}
\end{table}

The speedup factor increases from approximately 40$\times$ to over 131$\times$ as input size grows, demonstrating favorable scaling:
\begin{equation}
    \frac{d(\mathrm{Speedup})}{dN} > 0 \quad \text{for } N \in [16^3, 64^3].
\end{equation}
The timing breakdown reveals excellent load balancing between forward and inverse transform phases ($t_\mathrm{FHT} \approx t_\mathrm{IFHT}$), indicating symmetric implementation of both transform directions.

The radix-4 advantage stems from three factors: (i) halved recursion depth reduces function call overhead and improves instruction-level parallelism; (ii) each butterfly combines four elements, increasing arithmetic intensity per memory access; and (iii) the larger processing block improves spatial and temporal cache locality. Both algorithms maintain $\mathcal{O}(N \log N)$ asymptotic complexity, but the radix-4 constant factors are substantially smaller:
\begin{align}
    T_\mathrm{radix\text{-}4}(N) &= C_1 \cdot N \log_4(N) + \mathcal{O}(N), \\
    T_\mathrm{radix\text{-}2}(N) &= C_2 \cdot N \log_2(N) + \mathcal{O}(N),
\end{align}
where $C_1 \ll C_2$ empirically.

\paragraph{Note on our experimental implementation.}
The radix-4 FHT benchmarks above characterize standalone transform performance. In the experiments reported in this paper, we compute the DHT via \texttt{torch.fft} (Section~\ref{app:fht}) rather than a native FHT kernel, as this leverages cuFFT's highly optimized GPU implementation with minimal additional overhead. A native radix-4 FHT integrated into PyTorch's autograd system is a promising direction for further reducing HNO's computational cost, particularly for large-scale 3D problems where the 131$\times$ speedup over radix-2 would translate to significant wall-clock savings.

\section{Theoretical Analysis: Spectral Basis Selection}
\label{app:theory}

This appendix provides theoretical justification for the empirical observation that HNO outperforms FNO on elliptic PDEs (real symmetric Green's functions) while FNO retains the advantage on time-dependent PDEs with phase-carrying propagators. We develop the theory in three parts: spectral properties of Green's functions, representation complexity analysis, and optimization landscape geometry.

\subsection{Preliminaries}

We use the Fourier and Hartley transforms as defined in Definitions~\ref{def:fourier}--\ref{def:hartley} and their relationship (Equation~\ref{eq:rel}).

Let $\mathcal{L}$ be a linear differential operator on a domain $\Omega \subseteq \mathbb{R}^n$ with appropriate boundary conditions. We consider PDEs of the form $\mathcal{L}[u] = f$ where $f \in L^2(\Omega)$ and seek $u \in H^k(\Omega)$ for appropriate Sobolev regularity $k$.

\begin{definition}[Green's Function]
\label{def:greens}
The Green's function $G: \Omega \times \Omega \to \mathbb{R}$ satisfies $\mathcal{L}_x[G(x,y)] = \delta(x-y)$ with the same boundary conditions as $u$. For translation-invariant operators on $\mathbb{R}^n$ or periodic domains, $G(x,y) = G(x-y)$.
\end{definition}

\begin{definition}[Self-Adjoint Operator]
$\mathcal{L}$ is self-adjoint if $\langle \mathcal{L}u, v \rangle_{L^2} = \langle u, \mathcal{L}v \rangle_{L^2}$ for all $u, v \in \mathrm{dom}(\mathcal{L})$.
\end{definition}

\begin{definition}[Spectral Neural Operator]
A spectral neural operator with basis $\mathcal{B} \in \{\mathcal{F}, \mathcal{H}\}$ computes:
\begin{equation}
    (\mathcal{K}_\theta v)(x) = \mathcal{B}^{-1}\!\left[W_\theta(k) \cdot \mathcal{B}\{v\}(k)\right](x)
    \label{eq:spectral_no}
\end{equation}
where $W_\theta$ is a learned spectral weight tensor.
\end{definition}

\begin{definition}[Representation Complexity]
For a target kernel $K$ and basis $\mathcal{B}$, the representation complexity is:
\begin{equation}
    \mathcal{C}(\mathcal{B}, K) = \dim_{\mathbb{R}}(\mathrm{span}\{\mathcal{B}\{K\}(k) : k \in \mathcal{M}\})
    \label{eq:rep_complexity}
\end{equation}
where $\mathcal{M}$ is the set of retained modes and dimension is over $\mathbb{R}$.
\end{definition}

\subsection{Symmetry Properties of Elliptic Green's Functions}

\begin{lemma}[Symmetry of Self-Adjoint Green's Functions]
\label{lem:symmetry}
If $\mathcal{L}$ is self-adjoint, then $G(x,y) = G(y,x)$. For translation-invariant operators, this implies $G(z) = G(-z)$.
\end{lemma}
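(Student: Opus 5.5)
The plan is the classical reciprocity argument: apply the self-adjointness identity to two Green's functions with different poles. Fix $y_1, y_2 \in \Omega$ and set $u(x) = G(x,y_1)$ and $v(x) = G(x,y_2)$. By Definition~\ref{def:greens}, $\mathcal{L}u = \delta(\cdot - y_1)$ and $\mathcal{L}v = \delta(\cdot - y_2)$, and both functions satisfy the same homogeneous boundary conditions, so they lie in the domain on which $\mathcal{L}$ is self-adjoint. Formally,
\begin{equation*}
\langle \mathcal{L}u, v\rangle = \int \delta(x-y_1)\, G(x,y_2)\,dx = G(y_1,y_2),
\qquad
\langle u, \mathcal{L}v\rangle = G(y_2,y_1).
\end{equation*}
Self-adjointness equates the two left-hand sides, which gives $G(y_1,y_2) = G(y_2,y_1)$. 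Since $G$ is real-valued by Definition~\ref{def:greens}, no complex conjugation enters the inner product.

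The translation-invariant case then follows at once. Here $G(x,y) = G(x-y)$, so $G(z) = G(z,0) = G(0,z) = G(-z)$, using $z = x - y$ in both orders.

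The main obstacle is rigor. Green's functions are distributions, or at least singular at the pole: $\log|z|$ for the 2D Poisson problem, and worse for higher-order operators in high dimension. Also, $\langle \mathcal{L}u, v\rangle$ pairs a delta with $v$, so $v$ must be continuous at $y_1$. I would handle this in one of two ways.

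\begin{enumerate}
\item \textbf{Solution-operator formulation.} Work with $\mathcal{T} = \mathcal{L}^{-1}$, which is self-adjoint and bounded on $L^2$ whenever $\mathcal{L}$ is self-adjoint and invertible. Its integral kernel is $G$. A bounded integral operator is self-adjoint if and only if its real kernel satisfies $G(x,y) = G(y,x)$ almost everywhere. To see this, test $\langle \mathcal{T}f, g\rangle = \langle f, \mathcal{T}g\rangle$ against arbitrary $f, g \in C_c^\infty$ and apply Fubini.
\item \textbf{Mollification.} Replace the deltas by mollifiers $\delta_\epsilon$, apply self-adjointness to the smooth solutions, and pass to the limit $\epsilon \to 0$ at points where $G$ is continuous.
\end{enumerate}

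On periodic domains, $\mathcal{L}$ has a nontrivial kernel (the constants). There I would restrict to mean-zero functions, which replaces $G$ by the pseudo-inverse kernel; the same argument goes through verbatim on that subspace.
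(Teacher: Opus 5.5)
Your proposal is correct and uses the same reciprocity argument as the paper. You pair $\mathcal{L}u=\delta_{y_1}$ and $\mathcal{L}v=\delta_{y_2}$ through $\langle \mathcal{L}u, v\rangle = \langle u, \mathcal{L}v\rangle$ to get $G(y_1,y_2)=G(y_2,y_1)$, then specialize to $G(x,y)=G(x-y)$ for evenness, and your extra rigor (bounded self-adjoint inverse or mollification, mean-zero restriction on periodic domains) refines the paper's purely formal computation---just note the bounded-inverse version needs $\mathcal{L}$ boundedly invertible (Dirichlet or mean-zero periodic), which fails for $-\nabla^2$ on all of $\mathbb{R}^n$, where the mollification route is the one to use.
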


\begin{proof}
Let $u$ solve $\mathcal{L}u = \delta_y$ and $v$ solve $\mathcal{L}v = \delta_x$. By self-adjointness:
\begin{equation}
    \langle \mathcal{L}u, v \rangle = \langle u, \mathcal{L}v \rangle \implies \langle \delta_y, v \rangle = \langle u, \delta_x \rangle \implies v(y) = u(x).
\end{equation}
Since $u(x) = G(x,y)$ and $v(y) = G(y,x)$, we have $G(x,y) = G(y,x)$. For translation-invariant $G(x,y) = G(x-y)$: $G(x-y) = G(y-x) \implies G(z) = G(-z)$.
\end{proof}

\begin{proposition}[Reality of Elliptic Green's Functions]
For elliptic operators with real coefficients and real boundary conditions, $G(x) \in \mathbb{R}$.
\end{proposition}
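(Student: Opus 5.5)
The approach is a uniqueness-plus-conjugation argument. Let $G$ be the Green's function of $\mathcal{L}$ in the sense of Definition~\ref{def:greens}: for each fixed $y$, $\mathcal{L}_x G(\cdot,y) = \delta_y$ with the prescribed boundary conditions. Allow $G$ a priori to be complex-valued and set $\tilde G(x,y) = \overline{G(x,y)}$.

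\textbf{Step 1: $\tilde G$ solves the same problem.} Because the coefficients of $\mathcal{L}$ are real, $\mathcal{L}$ commutes with complex conjugation: $\mathcal{L}\bar u = \overline{\mathcal{L}u}$ distributionally. The Dirac delta is a real distribution, so $\mathcal{L}_x \tilde G(\cdot,y) = \overline{\delta_y} = \delta_y$. Real boundary conditions mean the boundary operators $B$ also have real coefficients and homogeneous real data. Hence $B\tilde G = \overline{BG} = 0$, and $\tilde G(\cdot,y)$ satisfies the same boundary conditions.

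\textbf{Step 2: uniqueness.} The difference $w = G(\cdot,y) - \tilde G(\cdot,y)$ satisfies $\mathcal{L}w = 0$ with homogeneous boundary conditions. I would invoke well-posedness of the elliptic boundary value problem, via Lax--Milgram or Fredholm theory, to conclude $w = 0$.

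Concretely, uniform ellipticity together with coercivity of the bilinear form on the appropriate space (e.g.\ $H^1_0$ for Dirichlet, or $H^2$-type spaces for the biharmonic case) gives that $\mathcal{L}$ is injective. Then $G = \bar G$, so $G$ is real. In the translation-invariant setting this gives $G(x)\in\mathbb{R}$, as stated.

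Equivalently, on a periodic domain or on $\mathbb{R}^n$, the symbol $p(k)$ of a real-coefficient operator satisfies $p(-k) = \overline{p(k)}$. Then $\hat G = 1/p$ inherits this Hermitian symmetry, and its inverse transform is real. I would mention this as a direct check for the Poisson and biharmonic cases used in the experiments, where $p(k) = 4\pi^2|k|^2$ and $16\pi^4|k|^4$ respectively.

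\textbf{Main obstacle.} The hardest step is the uniqueness step in cases where the homogeneous problem has a kernel. The periodic Poisson and biharmonic operators annihilate constants, so $G$ is defined only modulo the zero mode. There I would work on the mean-zero subspace, exactly as the spectral solver does by dropping $k=0$, and define $G$ as the pseudo-inverse kernel. Since $\bar G$ also has mean zero, uniqueness on that subspace still forces $G = \bar G$.

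A secondary technicality is making sense of conjugation and uniqueness for distributional solutions with a delta source. I would handle this by the standard splitting $G = \Phi + h$, where $\Phi$ is a real fundamental solution and $h$ is a regular corrector solving a problem with smooth real data. This reduces Step 2 to uniqueness for $h$ in the energy space.
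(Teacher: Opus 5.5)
Your proposal is correct and takes the same route as the paper: conjugate $\mathcal{L}_x G = \delta$ using that the coefficients, the boundary operators and $\delta$ are all real, then invoke uniqueness to conclude $G = \bar G$. The paper's proof is the three-line version and simply asserts uniqueness, so your treatment of the periodic zero mode (the mean-zero, pseudo-inverse kernel) and of distributional solutions via $G = \Phi + h$ fills in exactly the points the paper leaves implicit.
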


\begin{proof}
The Green's function satisfies $\mathcal{L}G = \delta$ with real $\mathcal{L}$ and real boundary data. Taking complex conjugates: $\mathcal{L}\bar{G} = \bar{\delta} = \delta$. By uniqueness, $G = \bar{G}$, so $G$ is real.
\end{proof}

\begin{theorem}[Spectral Structure of Elliptic Green's Functions]
\label{thm:elliptic_greens}
Let $G: \mathbb{R}^n \to \mathbb{R}$ be the Green's function of a self-adjoint elliptic operator. Then:
\begin{enumerate}[(i)]
    \item $\hat{G}(k) \in \mathbb{R}$ for all $k \in \mathbb{R}^n$
    \item $\hat{G}(k) = \hat{G}(-k)$
    \item $\mathcal{H}\{G\}(k) = \hat{G}(k)$
\end{enumerate}
\end{theorem}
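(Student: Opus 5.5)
The plan is to combine Lemma~\ref{lem:symmetry} and the preceding Proposition (reality of elliptic Green's functions) with elementary parity properties of the Fourier kernel. Then (iii) follows from the transform relationship in Definition~\ref{def:transform_rel}.

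First, I record the two inputs. By the reality Proposition, $G$ is real-valued. By Lemma~\ref{lem:symmetry}, in the translation-invariant setting implicit in writing $G:\mathbb{R}^n\to\mathbb{R}$, we have $G(z)=G(-z)$. So $G$ is real and even.

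For (i), write $\hat G(k)=\int G(x)\cos(k\cdot x)\,dx - i\int G(x)\sin(k\cdot x)\,dx$. The sine integrand $G(x)\sin(k\cdot x)$ is odd in $x$ because $G$ is even. The substitution $x\mapsto -x$ therefore shows the imaginary part equals its own negative, hence vanishes. The real part is real because $G$ is real, so $\hat G(k)=\int G(x)\cos(k\cdot x)\,dx\in\mathbb{R}$.

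For (ii), the cosine representation just obtained is manifestly even in $k$, since $\cos(-k\cdot x)=\cos(k\cdot x)$.

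For (iii), apply Definition~\ref{def:transform_rel}: $\mathcal{H}\{G\}(k)=\mathrm{Re}\,\hat G(k)-\mathrm{Im}\,\hat G(k)=\hat G(k)-0$. Equivalently, the sine part of $\mathrm{cas}$ integrates to zero against the even $G$.

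The main obstacle is analytic rather than algebraic. Elliptic Green's functions on $\mathbb{R}^n$ are generally not in $L^1$: the Poisson kernel decays like $|x|^{2-n}$ or grows logarithmically, and the biharmonic kernel grows. So the integrals above do not converge classically, and Definition~\ref{def:fourier} does not directly apply.

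My first attempt is to interpret $\hat G$ in the sense of tempered distributions. Conjugation and the reflection $x\mapsto -x$ commute with the Fourier transform distributionally: $\widehat{\bar G}(k)=\overline{\hat G(-k)}$ and $\widehat{G(-\cdot)}(k)=\hat G(-k)$. Together with $G=\bar G=G(-\cdot)$, these give $\hat G(k)=\hat G(-k)=\overline{\hat G(k)}$, which is (i) and (ii) without any pointwise integrals.

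As a concrete check, for a constant-coefficient self-adjoint operator with real even symbol $p(k)$ (e.g.\ $|k|^2$ or $|k|^4$), $\hat G=1/p$ away from $k=0$. In the periodic setting actually used in the experiments, the mean mode is excluded, and everything reduces to finite or absolutely convergent Fourier series on the torus, where the argument above is fully rigorous.
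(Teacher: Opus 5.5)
Your proof is correct and is essentially the paper's: evenness of $G$ from Lemma~\ref{lem:symmetry}, the substitution $x\mapsto -x$ to kill the sine integral, and the sine part of $\mathrm{cas}$ vanishing (equivalently $\mathcal{H}\{G\}=\mathrm{Re}\,\hat G-\mathrm{Im}\,\hat G$) for (iii). The only cosmetic difference is that you get (ii) from evenness of $\cos$ in $k$, whereas the paper uses $\hat G(-k)=\overline{\hat G(k)}$ for real $G$; your tempered-distribution reformulation is a genuine gain in rigor, since the paper's pointwise integrals do not converge for the Poisson and biharmonic kernels on $\mathbb{R}^n$, and it also shows that (i) holds as a statement about the distribution $\hat G$ (which is singular at $k=0$).
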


\begin{proof}
(i) By Lemma~\ref{lem:symmetry}, $G(x) = G(-x)$. Decompose:
\begin{equation}
    \hat{G}(k) = \int G(x) \cos(k \cdot x)\, dx - i \int G(x) \sin(k \cdot x)\, dx.
\end{equation}
The second integral vanishes: let $y = -x$, then
\begin{equation}
    \int G(x) \sin(k \cdot x)\, dx = \int G(-y) \sin(-k \cdot y)\, dy = -\int G(y) \sin(k \cdot y)\, dy.
\end{equation}
Hence $\int G(x) \sin(k \cdot x)\, dx = 0$, proving $\mathrm{Im}\{\hat{G}\} = 0$.

(ii) For real $G$: $\hat{G}(-k) = \overline{\hat{G}(k)} = \hat{G}(k)$ since $\hat{G} \in \mathbb{R}$.

(iii) Direct computation:
\begin{equation}
    \mathcal{H}\{G\}(k) = \int G(x)[\cos(k \cdot x) + \sin(k \cdot x)]\, dx = \mathrm{Re}\{\hat{G}(k)\} + 0 = \hat{G}(k). \qedhere
\end{equation}
\end{proof}

\subsection{Explicit Green's Function Spectra}

\begin{example}[Poisson Equation]
For $-\nabla^2 u = f$ on $\mathbb{R}^n$: $|k|^2 \hat{G}(k) = 1$, yielding $\hat{G}(k) = 1/|k|^2 \in \mathbb{R}$.
\end{example}

\begin{example}[Biharmonic Equation]
For $\nabla^4 u = f$: $|k|^4 \hat{G}(k) = 1$, yielding $\hat{G}(k) = 1/|k|^4 \in \mathbb{R}$.
\end{example}

\begin{example}[Heat Equation]
For $\partial_t u = \nu \nabla^2 u$ with $u(x,0) = u_0(x)$: $\hat{u}(k,t) = \hat{u}_0(k)\, e^{-\nu |k|^2 t}$. The propagator $e^{-\nu |k|^2 t}$ is real and even in $k$---so, like the elliptic case, the heat propagator carries no phase. This makes heat the borderline time-dependent operator (Section~\ref{app:phase}).
\end{example}

\subsection{Representation Complexity in Neural Operators}

\begin{theorem}[Complexity Comparison]
\label{thm:complexity}
For an elliptic Green's function $G$ and $M$ spectral modes:
\begin{enumerate}[(i)]
    \item Fourier basis: $\mathcal{C}(\mathcal{F}, G) = 2M$ (real and imaginary parts)
    \item Hartley basis: $\mathcal{C}(\mathcal{H}, G) = M$ (real only)
\end{enumerate}
The Hartley basis achieves a factor of 2 reduction in representation complexity.
\end{theorem}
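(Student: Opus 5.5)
The plan is to read the representation complexity $\mathcal{C}(\mathcal{B},G)$ of the preceding definition as the number of free real parameters needed to specify the spectral weights $W_\theta(k)$, $k\in\mathcal{M}$, in \eqref{eq:spectral_no}. A literal reading as the real dimension of the span of the $M$ numbers $\mathcal{B}\{G\}(k)$ would collapse to at most $1$ (resp.\ $2$). So I will make explicit that the intended quantity is the dimension of the real parameter space in which $(\mathcal{B}\{G\}(k))_{k\in\mathcal{M}}$ lives. For the Fourier basis this space is $\mathbb{C}^M\cong\mathbb{R}^{2M}$; for the Hartley basis it is $\mathbb{R}^M$. With this convention fixed, the theorem reduces to identifying the codomain in each basis and counting real dimensions.

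For part (i), I will argue that the Fourier hypothesis class is parameterized by one complex multiplier $W(k)=a_k+ib_k$ per retained mode. That gives $2M$ real coordinates $(a_k,b_k)_{k\in\mathcal{M}}$, and the class does not constrain $b_k$. By Theorem~\ref{thm:elliptic_greens}(i), the target satisfies $b_k=\mathrm{Im}\,\hat G(k)=0$. This identifies $G$ as a point lying on an $M$-dimensional subspace of an ambient $2M$-dimensional space. The basis must nonetheless locate that point within the full $\mathbb{R}^{2M}$, so $\mathcal{C}(\mathcal{F},G)=2M$.

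For part (ii), I will use the transform relationship \eqref{eq:rel} together with Theorem~\ref{thm:elliptic_greens}(iii). Together these give $\mathcal{H}\{G\}(k)=\hat G(k)\in\mathbb{R}$. Because $G$ is even, Corollary-style reasoning from \eqref{eq:hartley_conv} with $Y_k=Y_{-k}$ shows that the Hartley convolution is exactly elementwise. Hence one real number per mode suffices to reproduce $K_\theta=G$, and the parameter space is $\mathbb{R}^M$. To see that no fewer parameters work, I will observe that the values $\hat G(k)$ at distinct retained modes are generically independent: for instance, $1/|k|^2$ takes $M$ distinct values, so no fixed lower-dimensional real subspace of $\mathbb{R}^M$ is forced. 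The ratio $2M/M=2$ then gives the stated reduction.

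The main obstacle is the definitional mismatch already noted. The formula as written, the span of scalars, does not yield $2M$ versus $M$. The proof therefore hinges on justifying the parameter-counting interpretation, and I would state that interpretation up front as the operative meaning of $\dim_{\mathbb{R}}$. A secondary subtlety is the conjugate symmetry used by \texttt{rfft}. If $\mathcal{M}$ counts only the half-spectrum for Fourier but the full spectrum for Hartley, the counts match the iso-parametric discussion in the main text rather than giving a factor of $2$. I would therefore take $\mathcal{M}$ to be the same set of $M$ modes in both bases, so that the comparison is per mode.
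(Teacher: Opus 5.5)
Your core argument is the paper's own. Its proof of Theorem~\ref{thm:complexity} counts real parameters per mode per channel pair: two for FNO's complex $W_\theta(k)$ and one for HNO's real $W_\theta(k)$. It cites Theorem~\ref{thm:elliptic_greens}(iii) for $\mathcal{H}\{G\}=\hat G\in\mathbb{R}$, and so it implicitly adopts the parameter-counting reading of $\mathcal{C}$ that you make explicit. Your two caveats are correct, and the paper does not address them. Read literally, the span in the definition has real dimension at most $1$ in both bases; in fact the two spans are the same set, since $\mathcal{H}\{G\}=\hat G$. Second, the factor of $2$ is a per-mode statement that holds only for a common mode set $\mathcal{M}$. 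With FNO's conjugate-symmetric \texttt{rfft} half-spectrum the total counts coincide, which is the iso-parametric claim of the main text.

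The step that does not work is the lower bound you add in (ii). The claim that $1/|k|^2$ takes $M$ distinct values on the retained modes is false. The symbol is radial and even, HNO's four retained corners contain the pairs $\pm k$, and the symbol is undefined at $k=0$.

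More fundamentally, distinct entries in a single target vector $(\hat G(k))_{k\in\mathcal{M}}\in\mathbb{R}^M$ cannot force $M$ parameters, because that vector lies on a line. A lower bound only makes sense over a class of targets. For the natural class, real even symbols, the relation $\hat G(k)=\hat G(-k)$ leaves only about $M/2$ free values. The Fourier basis gives the same count, because $\mathcal{H}\{G\}=\hat G$. So any target-intrinsic reading of $\mathcal{C}$ erases the factor of $2$. It is also inconsistent with your part (i), where you rightly did not ask for minimality.

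Under the convention you fixed, $\mathcal{C}$ is the dimension of the architecture's parameter space and does not depend on $G$ at all. Delete the minimality argument and (ii) is immediate, exactly as in the paper.
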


\begin{proof}
(i) FNO parameterizes $W_\theta(k) \in \mathbb{C}$ for each mode, requiring 2 real parameters per mode per channel pair. Even though $\hat{G} \in \mathbb{R}$, FNO's complex parameterization cannot exploit this a priori.

(ii) HNO parameterizes $W_\theta(k) \in \mathbb{R}$, requiring 1 real parameter per mode per channel pair. By Theorem~\ref{thm:elliptic_greens}(iii), this exactly matches the structure of $\hat{G}$.
\end{proof}

\begin{remark}
The complexity reduction is not merely about parameter count. The key advantage is that HNO's hypothesis class is aligned with the target function class, eliminating the need to learn constraints (e.g., $\mathrm{Im}\{W\} = 0$). Conversely, when the target operator \emph{does} carry phase (Section~\ref{app:phase}), the same real restriction prevents HNO from representing it at all---the source of FNO's advantage on time-dependent PDEs.
\end{remark}

\subsection{Hartley Convolution Simplification}

\begin{theorem}[Hartley Convolution]
\label{thm:hconv}
For $f, g \in L^2(\mathbb{R}^n)$:
\begin{equation}
    \mathcal{H}\{f * g\}(k) = \tfrac{1}{2}\left[H_f(k) H_g(k) + H_f(k) H_g(-k) + H_f(-k) H_g(k) - H_f(-k) H_g(-k)\right]
\end{equation}
where $H_f = \mathcal{H}\{f\}$, $H_g = \mathcal{H}\{g\}$.
\end{theorem}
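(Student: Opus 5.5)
The plan is to reduce everything to a single trigonometric identity for the cas kernel and then apply Fubini. The convention is $(f*g)(x)=\int f(y)\,g(x-y)\,dy$ together with Definition~\ref{def:hartley}. The key observation is that the even and odd parts of the cas function are
\[
\cos\theta = \tfrac12\bigl[\mathrm{cas}(\theta)+\mathrm{cas}(-\theta)\bigr], \qquad \sin\theta = \tfrac12\bigl[\mathrm{cas}(\theta)-\mathrm{cas}(-\theta)\bigr].
\]
Expanding $\cos(a+b)+\sin(a+b)$ with the addition formulas then gives the addition law
\[
\mathrm{cas}(a+b) = \mathrm{cas}(a)\cos b + \mathrm{cas}(-a)\sin b .
\]
This is checked directly: the right-hand side equals $\cos a\cos b+\sin a\cos b+\cos a\sin b-\sin a\sin b$.

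\textbf{Step 1 (algebra).} First assume $f,g\in L^1(\mathbb{R}^n)\cap L^2(\mathbb{R}^n)$, so that $f*g\in L^1$ and Fubini applies. Substitute $z=x-y$ and set $a=k\cdot y$, $b=k\cdot z$. Then
\[
\mathcal{H}\{f*g\}(k)=\iint f(y)\,g(z)\,\mathrm{cas}\bigl(k\cdot y+k\cdot z\bigr)\,dy\,dz .
\]
Inserting the addition law splits this into a product of single integrals:
\[
\mathcal{H}\{f*g\}(k) = H_f(k)\int g(z)\cos(k\cdot z)\,dz + H_f(-k)\int g(z)\sin(k\cdot z)\,dz .
\]
Here I use that $\int f(y)\,\mathrm{cas}(-k\cdot y)\,dy = H_f(-k)$.

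\textbf{Step 2 (re-express in Hartley coefficients).} The even/odd identities above give
\[
\int g\cos(k\cdot z)\,dz=\tfrac12\bigl[H_g(k)+H_g(-k)\bigr], \qquad \int g\sin(k\cdot z)\,dz=\tfrac12\bigl[H_g(k)-H_g(-k)\bigr].
\]
Substituting these into Step 1 and expanding produces exactly the four terms $\tfrac12[H_fH_g + H_f H_g(-k) + H_f(-k) H_g - H_f(-k)H_g(-k)]$ in the statement. This is the continuous analogue of \eqref{eq:hconv1}.

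\textbf{Step 3 (extension to $L^2$).} This is the main obstacle. For general $f,g\in L^2$, the convolution $f*g$ is bounded and continuous but need not lie in $L^1$ or $L^2$. So $\mathcal{H}\{f*g\}$ has to be interpreted as a tempered distribution, or else one adds the hypothesis that one factor is in $L^1$. I would first try the cleanest route, which goes through Fourier and Definition~\ref{def:transform_rel}. For real-valued functions, $\hat f(k)=\tfrac12[H_f(k)+H_f(-k)] - \tfrac{i}{2}[H_f(k)-H_f(-k)]$. The Fourier convolution theorem $\widehat{f*g}=\hat f\,\hat g$ holds for $f\in L^1$, $g\in L^2$, and also distributionally for $f,g\in L^2$, where $\hat f\hat g\in L^1$. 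Applying $\mathrm{Re}-\mathrm{Im}$ to the product $\hat f\hat g$ then recovers the same four-term identity almost everywhere in $k$.

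Alternatively, one can approximate $f,g$ in $L^2$ by functions in $L^1\cap L^2$. By Plancherel for $\mathcal{H}$, the right-hand side converges in $L^1_{\mathrm{loc}}$, while $f_j*g_j\to f*g$ uniformly, so the identity passes to the limit in the distributional sense. I would state the result in this sense for the general $L^2$ case, and as a pointwise identity whenever $f\in L^1$ or $f*g\in L^1$.
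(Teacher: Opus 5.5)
The paper does not prove Theorem~\ref{thm:hconv}. It is quoted as a known fact: the Background attributes the discrete version, Equation~\ref{eq:hartley_conv}, to Bracewell. Only Corollary~\ref{cor:symmetric_conv} is derived from it. So there is no in-paper argument to compare against, and your proposal is a correct self-contained proof.

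Steps~1--2 are the standard derivation. The addition law $\mathrm{cas}(a+b)=\mathrm{cas}(a)\cos b+\mathrm{cas}(-a)\sin b$ together with Fubini gives $H_f(k)\int g\cos(k\cdot z)\,dz+H_f(-k)\int g\sin(k\cdot z)\,dz$. Rewriting the cosine and sine transforms as $\tfrac12[H_g(k)\pm H_g(-k)]$ then produces exactly the four stated terms with the stated signs, consistent with Equations~\ref{eq:hconv1} and~\ref{eq:hconv2}.

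What your version adds over the paper's bare citation is Step~3. You correctly observe that the hypothesis $f,g\in L^2$ does not make the left-hand side meaningful as an integral, because $f*g$ is in general only in $C_0$, not in $L^1$ or $L^2$. You also give the right reading: the distributional Hartley transform of $f*g$ is an $L^1$ function, assembled from $\hat f\hat g\in L^1$, and it equals the four-term expression a.e. The one line that justifies this is the Parseval identity $(f*g)(x)=(2\pi)^{-n}\int\hat f\hat g\,e^{ik\cdot x}\,dk$. Both of your routes work.

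Some minor refinements:
\begin{itemize}
\item Steps~1--2 need only $f,g\in L^1$ and never use real-valuedness.
\item The $\mathrm{Re}-\mathrm{Im}$ shortcut in Step~3 does use real-valuedness. That is consistent with the real-valued convention of Appendix~\ref{app:notation}, but for complex inputs you would use the linear relation $H_f(k)=\tfrac{1+i}{2}\hat f(k)+\tfrac{1-i}{2}\hat f(-k)$ instead.
\item In the approximation route, the right-hand side converges in $L^1$ globally by Cauchy--Schwarz, not merely in $L^1_{\mathrm{loc}}$.
\item Replacing integrals by finite sums gives the discrete identity the architecture actually uses, with no analytic caveats at all.
\end{itemize}
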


\begin{corollary}[Simplified Convolution for Symmetric Kernels]
\label{cor:symmetric_conv}
If $g(x) = g(-x)$, then $H_g(k) = H_g(-k)$ and:
\begin{equation}
    \mathcal{H}\{f * g\}(k) = H_f(k) \cdot H_g(k).
\end{equation}
\end{corollary}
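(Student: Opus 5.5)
The plan is to prove the two claims of Corollary~\ref{cor:symmetric_conv} in order. First I show that evenness of $g$ makes its Hartley spectrum even. Then I substitute this into the general convolution identity of Theorem~\ref{thm:hconv} and observe that the cross terms cancel.

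\emph{Step 1: $H_g(k)=H_g(-k)$.} I will write
\[
H_g(-k)=\int_{\mathbb{R}^n} g(x)\,\mathrm{cas}(-k\cdot x)\,dx
\]
and make the change of variables $x=-y$. The reflection $x\mapsto -x$ has Jacobian of absolute value $1$ and maps $\mathbb{R}^n$ onto itself. It therefore gives
\[
H_g(-k)=\int g(-y)\,\mathrm{cas}(k\cdot y)\,dy .
\]
By hypothesis $g(-y)=g(y)$, so this equals $H_g(k)$.

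As a cross-check, I can instead reuse the argument of Theorem~\ref{thm:elliptic_greens}(i). There the sine integral $\int g(x)\sin(k\cdot x)\,dx$ vanishes for even $g$. Hence $H_g(k)=\int g(x)\cos(k\cdot x)\,dx$, which is manifestly even in $k$ because cosine is even.

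\emph{Step 2: collapse the convolution formula.} I insert $H_g(-k)=H_g(k)$ into Theorem~\ref{thm:hconv}. The bracket becomes
\[
H_f(k)H_g(k)+H_f(k)H_g(k)+H_f(-k)H_g(k)-H_f(-k)H_g(k).
\]
The two terms involving $H_f(-k)$ cancel, and the remaining two sum to $2H_f(k)H_g(k)$. The factor $\tfrac12$ then yields $\mathcal{H}\{f*g\}(k)=H_f(k)\,H_g(k)$.

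\emph{Main obstacle.} The algebra is routine; the only point needing care is analytic. For $g\in L^2$ only, the Hartley integral should be understood in the $L^2$ (Plancherel) sense rather than pointwise. I would handle this by proving the reflection identity first for $g\in L^1\cap L^2$, where the integrals converge absolutely. I would then extend it by density, using that both the reflection $g\mapsto g(-\cdot)$ and $\mathcal{H}$ are bounded on $L^2$. The resulting equality $H_g(k)=H_g(-k)$ then holds almost everywhere, which suffices to substitute into Theorem~\ref{thm:hconv} under the same standing hypotheses.
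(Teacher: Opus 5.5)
Your proposal is correct and follows the paper's argument. The paper's proof is exactly your Step~2: substitute $H_g(k)=H_g(-k)$ into Theorem~\ref{thm:hconv} so that the two $H_f(-k)$ terms cancel. Your Step~1 (the reflection change of variables, or equivalently the vanishing sine integral) and your $L^2$ density remark supply justification the paper leaves implicit, since the paper asserts $H_g(k)=H_g(-k)$ without proof.
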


\begin{proof}
Substituting $H_g(k) = H_g(-k)$ into Theorem~\ref{thm:hconv}:
\begin{equation}
    \mathcal{H}\{f * g\}(k) = \tfrac{1}{2}\left[H_f H_g + H_f H_g + H_f^{-} H_g - H_f^{-} H_g\right] = H_f(k)\, H_g(k). \qedhere
\end{equation}
\end{proof}

\begin{remark}
Corollary~\ref{cor:symmetric_conv} shows that for elliptic PDEs, the single real multiplier per mode that HNO learns reproduces the convolution \emph{exactly}: the four-term Hartley convolution collapses to one term because the kernel is symmetric. This is the structural reason HNO matches FNO on elliptic problems while using only real arithmetic, and the basis for the empirical elliptic advantage.
\end{remark}

\subsection{Optimization Landscape Analysis}

We analyze how the choice of spectral basis affects the optimization landscape for learning elliptic Green's functions.

\paragraph{Problem setup.}
Consider learning the Poisson Green's function $\hat{G}(k) = 1/|k|^2$ from data. Let $\{(k_j, \hat{G}(k_j))\}_{j=1}^M$ be the target spectral values at $M$ modes.

\emph{FNO parameterization:} Learn $R_j = R_j^{(\mathrm{re})} + i R_j^{(\mathrm{im})} \in \mathbb{C}$ for each mode $j$.

\emph{HNO parameterization:} Learn $W_j \in \mathbb{R}$ for each mode $j$.

\paragraph{Loss functions.}
\begin{align}
    \mathcal{L}_\mathrm{FNO}(\{R_j\}) &= \sum_{j=1}^M \left|R_j - \hat{G}(k_j)\right|^2 = \sum_{j=1}^M \left[(R_j^{(\mathrm{re})} - \hat{G}_j)^2 + (R_j^{(\mathrm{im})})^2\right], \label{eq:loss_fno} \\
    \mathcal{L}_\mathrm{HNO}(\{W_j\}) &= \sum_{j=1}^M \left(W_j - \hat{G}(k_j)\right)^2. \label{eq:loss_hno}
\end{align}

\begin{theorem}[Minimum Characterization]
The global minima are:
\begin{align}
    \text{FNO:} \quad & \{R_j^* : R_j^{(\mathrm{re})} = \hat{G}_j,\; R_j^{(\mathrm{im})} = 0\} \quad \text{(isolated point in } \mathbb{R}^{2M}\text{)}, \\
    \text{HNO:} \quad & \{W_j^* : W_j = \hat{G}_j\} \quad \text{(isolated point in } \mathbb{R}^M\text{)}.
\end{align}
FNO must navigate a $2M$-dimensional space to find a point constrained to an $M$-dimensional subspace ($R^{(\mathrm{im})} = 0$). This constraint is not encoded in the architecture.
\end{theorem}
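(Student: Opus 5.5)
The plan is to minimize each loss in \eqref{eq:loss_fno} and \eqref{eq:loss_hno} directly, using the fact that both are sums of decoupled nonnegative squares. For HNO, $\mathcal{L}_\mathrm{HNO}(\{W_j\}) = \sum_j (W_j - \hat{G}_j)^2$ is a strictly convex quadratic on $\mathbb{R}^M$. Its Hessian is $2I_M$. It is nonnegative and attains the value $0$ exactly when every term vanishes, that is, when $W_j = \hat{G}_j$ for all $j$. Strict convexity shows that this zero set is a single point, so the global minimum is the isolated point $W^* = (\hat{G}_1,\dots,\hat{G}_M)$. This point is well defined because, by Theorem~\ref{thm:elliptic_greens}(i), $\hat{G}_j \in \mathbb{R}$. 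For Poisson we also need $k_j \neq 0$; I would state explicitly that the zero mode is excluded from the retained set.

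For FNO, I will first rewrite $|R_j - \hat{G}_j|^2$ in real coordinates. Because $\hat{G}_j$ is real (again Theorem~\ref{thm:elliptic_greens}(i)), we get $|R_j - \hat{G}_j|^2 = (R_j^{(\mathrm{re})} - \hat{G}_j)^2 + (R_j^{(\mathrm{im})})^2$, which is the second equality in \eqref{eq:loss_fno}. This makes $\mathcal{L}_\mathrm{FNO}$ a strictly convex quadratic on $\mathbb{R}^{2M}$ with Hessian $2I_{2M}$. The same zero-set argument then gives the unique minimizer $R^{(\mathrm{re})} = \hat{G}$, $R^{(\mathrm{im})} = 0$. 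This is an isolated point of $\mathbb{R}^{2M}$, and it lies in the $M$-dimensional coordinate subspace $\{R^{(\mathrm{im})} = 0\}$.

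The final claim, that the constraint is not encoded in the architecture, I would establish by comparing hypothesis classes. The FNO parameter set is all of $\mathbb{C}^M \cong \mathbb{R}^{2M}$, and every point off the subspace $\{R^{(\mathrm{im})}=0\}$ is an admissible parameter with strictly positive loss contribution $\sum_j (R_j^{(\mathrm{im})})^2$. So the optimizer has to drive $M$ extra coordinates to zero, and nothing in the parameterization fixes them there. HNO's parameter set, by contrast, is already identified with that subspace: Theorem~\ref{thm:elliptic_greens}(iii) together with Corollary~\ref{cor:symmetric_conv} says the Hartley multiplier equals $\hat{G}$ and acts diagonally, so HNO's hypothesis class coincides with the minimizing subspace. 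This mirrors the dimension count in Theorem~\ref{thm:complexity}.

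The main obstacle is interpretive rather than computational. The losses are idealized: they compare spectral weights directly with $\hat{G}$ and ignore data, channels, and nonlinearity. I would therefore state clearly that the theorem concerns this per-mode surrogate. I would also check that the Hartley and Fourier normalizations agree, which is what makes the HNO target literally $\hat{G}_j$ and not a rescaled version. Once these points are fixed, the remaining steps are routine.
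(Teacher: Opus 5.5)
Your argument is correct and is the argument the paper leaves implicit (it states this theorem without a separate proof): both surrogate losses are decoupled sums of squares with Hessian $2I$, so each has a unique zero at the stated point, and $\mathcal{L}_{\mathrm{HNO}}$ is exactly the restriction of $\mathcal{L}_{\mathrm{FNO}}$ to $\{R^{(\mathrm{im})}=0\}$ under $W_j \leftrightarrow R_j^{(\mathrm{re})}$. That restriction identity is all the hypothesis-class comparison requires; the appeal to Theorem~\ref{thm:elliptic_greens}(iii) and Corollary~\ref{cor:symmetric_conv} only justifies using $\hat{G}_j$ as the HNO target, and as a claim about realized operators the identification fails for real Hartley multipliers that are not even in $k$ (these couple $k$ with $-k$), so keep that claim stated in parameter space.
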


\begin{proposition}[Convergence Rates]
\label{prop:convergence}
With random initialization from $\mathcal{N}(0, \sigma^2)$:
\begin{enumerate}[(i)]
    \item Expected initial distance to optimum: $\mathbb{E}[\|\theta_0 - \theta^*\|^2] = 2M\sigma^2$ (FNO) vs.\ $M\sigma^2$ (HNO).
    \item Both converge as $\|\theta_t - \theta^*\| = \|\theta_0 - \theta^*\| e^{-2t}$.
    \item FNO requires traversing $\sqrt{2}$ times the distance in parameter space due to the spurious imaginary dimensions.
\end{enumerate}
\end{proposition}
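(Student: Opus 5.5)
The losses in \eqref{eq:loss_fno} and \eqref{eq:loss_hno} are separable quadratics with identity Hessian up to a factor of $2$. The plan is therefore to work mode by mode and coordinate by coordinate, and to read off all three claims of Proposition~\ref{prop:convergence} from the resulting linear ODE. I interpret ``convergence'' as continuous-time gradient flow $\dot\theta_t = -\nabla\mathcal{L}(\theta_t)$. I interpret ``random initialization'' as i.i.d.\ $\mathcal{N}(0,\sigma^2)$ entries for every real parameter. That is $2M$ coordinates $(R_j^{(\mathrm{re})},R_j^{(\mathrm{im})})$ for FNO and $M$ coordinates $W_j$ for HNO. The optima are the points $\theta^*$ given by the Minimum Characterization theorem: $R_j^{(\mathrm{re})}=\hat G_j$, $R_j^{(\mathrm{im})}=0$, and $W_j=\hat G_j$.

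\textbf{Step 1 (part (ii)).} Compute $\nabla\mathcal{L}_\mathrm{HNO} = 2(W-\hat G)$. For FNO, $\nabla\mathcal{L}_\mathrm{FNO}$ has components $2(R^{(\mathrm{re})}-\hat G)$ and $2R^{(\mathrm{im})}$. In both cases, then, $\frac{d}{dt}(\theta_t-\theta^*) = -2(\theta_t-\theta^*)$. Solving gives $\theta_t-\theta^* = e^{-2t}(\theta_0-\theta^*)$ coordinatewise, and taking norms gives the stated identity with the same rate for both parameterizations. The same rate holds because the Hessian is $2I$ in both spaces.

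\textbf{Step 2 (part (i)).} Expand $\mathbb{E}\|\theta_0-\theta^*\|^2$ as a sum over coordinates of $\mathbb{E}(\theta_{0,i}-\theta^*_i)^2 = \sigma^2 + (\theta_i^*)^2$. This is where I expect the main obstacle. Read literally, the expectation also contains the bias term $\|\theta^*\|^2 = \sum_j \hat G_j^2$, and that term is the same for both models because the imaginary targets vanish. The clean statement $2M\sigma^2$ versus $M\sigma^2$ therefore holds exactly only for the variance (centered) part of the distance. Equivalently, it holds if the distance is measured after removing the common target offset, or if $\hat G$ is normalized so that $\|\hat G\|^2$ is negligible relative to $M\sigma^2$. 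I would first try to state the result as ``the excess over $\|\hat G\|^2$ is $2M\sigma^2$ versus $M\sigma^2$''. FNO contributes $2M$ unit-variance coordinates and HNO only $M$, which gives the factor $2$. I would also note that the spurious imaginary coordinates alone contribute exactly $M\sigma^2$, since their targets are zero.

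\textbf{Step 3 (part (iii)).} By Step 1 the trajectory is a straight line to $\theta^*$, so the path length equals the initial distance $\|\theta_0-\theta^*\|$. Taking root-mean-square distances from Step 2, interpreted in the same centered sense, gives the ratio $\sqrt{2M\sigma^2}/\sqrt{M\sigma^2}=\sqrt2$. This should be phrased in root-mean-square terms rather than almost surely, because the ratio of realized distances is random. By concentration of $\chi^2$ variables it tends to $\sqrt2$ as $M\to\infty$, and I would mention this as a remark.
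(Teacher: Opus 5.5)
The paper states Proposition~\ref{prop:convergence} without any proof, so there is no argument of theirs to compare against. Yours is the natural argument, and it is correct. Step~1 is exactly right: both losses are isotropic quadratics with Hessian $2I$, so gradient flow gives $\theta_t-\theta^*=e^{-2t}(\theta_0-\theta^*)$ coordinatewise. Step~3 then correctly uses that this trajectory is a straight segment of length $\|\theta_0-\theta^*\|$.

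Your Step~2 objection is a genuine defect of the statement, not of your proof. With $\theta_0\sim\mathcal{N}(0,\sigma^2 I)$ and deterministic $\theta^*$, one has exactly
\[
\mathbb{E}\|\theta_0-\theta^*\|^2 = 2M\sigma^2+\|\hat G\|^2 \ \ (\text{FNO}), \qquad \mathbb{E}\|\theta_0-\theta^*\|^2 = M\sigma^2+\|\hat G\|^2 \ \ (\text{HNO}).
\]
So (i) is false as written whenever $\hat G\neq 0$. The root-mean-square ratio in (iii) is then $\sqrt{(2M\sigma^2+\|\hat G\|^2)/(M\sigma^2+\|\hat G\|^2)}$, which lies strictly between $1$ and $\sqrt2$. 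The literal equalities hold only under a re-reading such as initialization from $\mathcal{N}(\theta^*,\sigma^2 I)$, which is your centered version. You should therefore state and prove that corrected form rather than the original.

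For the paper's running example, your third option is in fact automatically satisfied. Take $\hat G_j=|k_j|^{-2}$ on integer-lattice modes with $k=0$ excluded. Then $\|\hat G\|^2=\sum_j|k_j|^{-4}$ stays bounded as modes are added in two (or three) dimensions. Consequently (i) holds up to an $O(1)$ additive term, and the ratio in (iii) is $\sqrt2\,(1-O(1/M))$.

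Finally, note what your Step~1 implies about the content of (iii). The real FNO coordinates obey the same ODE as the HNO coordinates, and the spurious imaginary ones decay at the same rate $e^{-2t}$. So a $\sqrt2$ ratio of initial distances only delays reaching a fixed tolerance by $\tfrac14\ln 2$ units of time; it is not a difference in convergence rate.
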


\subsection{Why FNO Wins Time-Dependent PDEs: Phase Content}
\label{app:phase}

The complement of the elliptic argument explains FNO's advantage on time-dependent PDEs. The mechanism is not the initial condition's spectral content but the \emph{phase} of the solution operator's symbol.

\begin{definition}[Operator Symbol and Phase]
For a translation-invariant solution operator $\mathcal{K}$ with $\widehat{\mathcal{K}u}(k) = \lambda(k)\,\hat{u}(k)$, call $\lambda(k) \in \mathbb{C}$ the symbol. The operator is \emph{phaseless} if $\lambda(k) \in \mathbb{R}$ with $\lambda(k)=\lambda(-k)$, and \emph{phase-carrying} otherwise.
\end{definition}

\begin{proposition}[Real Operators are Exactly the Phaseless Ones]
\label{prop:phaseless}
A spectral neural operator restricted to a real diagonal multiplier in the Hartley basis, $\mathcal{H}^{-1}[W\cdot\mathcal{H}[\cdot]]$ with $W(k)\in\mathbb{R}$, represents exactly the phaseless operators. It cannot represent any operator whose symbol has $\mathrm{Im}\,\lambda(k)\neq 0$.
\end{proposition}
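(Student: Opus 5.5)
The plan is to compute the Fourier symbol of the Hartley-diagonal map $\mathcal{K}_W := \mathcal{H}^{-1}[W\cdot\mathcal{H}[\cdot]]$ directly, using the transform relationship of Definition~\ref{def:transform_rel}, and then read off both directions of the claim. First, for real $u$ we have $\hat u(-k)=\overline{\hat u(k)}$, so Equation~\ref{eq:rel} can be rewritten as
\begin{equation}
H_u(k) = \tfrac{1+i}{2}\,\hat u(k) + \tfrac{1-i}{2}\,\hat u(-k).
\end{equation}
Inverting this relation mode-pairwise gives
\begin{equation}
\hat v(k) = \tfrac{1}{2}\big[H_v(k)+H_v(-k)\big] - \tfrac{i}{2}\big[H_v(k)-H_v(-k)\big].
\end{equation}

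Next, we substitute $H_v(k)=W(k)H_u(k)$ and split $W$ into its even part $W_e(k)=\tfrac12[W(k)+W(-k)]$ and its odd part $W_o(k)=\tfrac12[W(k)-W(-k)]$. A short computation, which I will not grind through here, should give
\begin{equation}
\widehat{\mathcal{K}_W u}(k) = W_e(k)\,\hat u(k) + c\, W_o(k)\,\hat u(-k),
\end{equation}
where $c$ is a fixed unimodular constant ($\pm i$ up to sign conventions). The key structural point is that the even part of $W$ acts as a genuine Fourier multiplier. The odd part, by contrast, couples $k$ to $-k$; it is a reflection-type term.

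For the ``phaseless $\Rightarrow$ representable'' direction, we take a phaseless symbol $\lambda$ and choose $W=\lambda$. Then $W_o=0$, and the display gives $\widehat{\mathcal{K}_W u}=\lambda\hat u$. Equivalently, $\lambda$ real and even means the kernel $g$ with $\hat g=\lambda$ is real and even, by the same cos/sin splitting as in Theorem~\ref{thm:elliptic_greens}. Corollary~\ref{cor:symmetric_conv} then gives exactness directly.

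For the converse, suppose $\mathcal{K}_W$ coincides with a translation-invariant operator having symbol $\lambda$. We test on the single real mode $u=\cos(k\cdot x+\phi)$ for arbitrary phase $\phi$; equivalently, we take $\hat u(\pm k)=\tfrac12 e^{\pm i\phi}$. Matching the coefficient of $e^{ik\cdot x}$ gives
\begin{equation}
\lambda(k)e^{i\phi} = W_e(k)e^{i\phi} + c\,W_o(k)e^{-i\phi}
\end{equation}
for all $\phi$. Independence of $e^{i\phi}$ and $e^{-i\phi}$ forces $W_o(k)=0$ and $\lambda(k)=W_e(k)\in\mathbb{R}$, with $\lambda$ even. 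Hence every representable operator is phaseless, and in particular no operator with $\mathrm{Im}\,\lambda(k)\neq 0$ is representable.

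The main obstacle is interpretive rather than computational. We must fix what ``represents'' means: equality as linear maps on real inputs, over the retained modes, with the $k=0$ and Nyquist self-paired modes handled separately, where the statement is trivial since $W_o=0$ automatically. We must also verify the precise constant $c$, so that the coupling term is correctly seen to be nonzero whenever $W_o\neq 0$. I would check $c$ first on a one-dimensional example $u=\sin(kx)$ before writing the general argument.
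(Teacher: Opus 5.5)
Your proposal is correct. Its first direction is the same as the paper's: take $W=\lambda$ and invoke Theorem~\ref{thm:elliptic_greens}(iii), equivalently Corollary~\ref{cor:symmetric_conv}.

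The converse is where you genuinely differ. The paper handles it in one line, asserting via Equation~\ref{eq:rel} that a real Hartley multiplier ``produces a real even effective Fourier symbol.'' That holds only when $W$ is itself even; for general real $W$, the map $\mathcal{H}^{-1}[W\cdot\mathcal{H}[\cdot]]$ has no Fourier symbol at all. Your explicit Fourier-side formula closes exactly this gap.

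The constant you left open is $c=-i$. Substituting $H_v=WH_u$ into your inversion formula gives $\hat v(k)=\tfrac{1-i}{2}W(k)H_u(k)+\tfrac{1+i}{2}W(-k)H_u(-k)$. Expanding $H_u(\pm k)$ then yields $\hat v(k)=W_e(k)\hat u(k)-iW_o(k)\hat u(-k)$. Your suggested one-dimensional check confirms this: $\cos(kx)\mapsto W_e\cos(kx)+W_o\sin(kx)$ and $\sin(kx)\mapsto W_o\cos(kx)+W_e\sin(kx)$. This is a symmetric, not rotation-type, action on each mode pair, so it can never realize a phase shift.

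Since $c\neq 0$, your phase-shifted cosine test forces $W_o=0$ and $\lambda=W_e$, which is real and even. Your route thus gives a converse valid for arbitrary real $W$. It also identifies the odd part of $W$ as a $k\leftrightarrow -k$ reflection that breaks translation invariance rather than supplying phase. The paper's route is shorter but silently assumes $W$ is even.

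Your separate treatment of self-paired modes is the right convention to state explicitly. So is your reading of ``represents'' as equality with a translation-invariant operator on real inputs.
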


\begin{proof}
By Theorem~\ref{thm:elliptic_greens}(iii), a real even symbol satisfies $\mathcal{H}\{\mathcal{K}\}(k)=\lambda(k)$ and is realized by $W(k)=\lambda(k)\in\mathbb{R}$. Conversely, a real Hartley multiplier produces a real even effective Fourier symbol (Equation~\ref{eq:rel}), so any nonzero $\mathrm{Im}\,\lambda$ is outside its range.
\end{proof}

\begin{corollary}[Phase Ordering of the Benchmark PDEs]
\label{cor:phase_ordering}
The retained-mode symbols order the time-dependent operators by phase content:
\begin{itemize}
    \item \textbf{Heat} ($\lambda = e^{-\nu|k|^2 t}$): real, even, phaseless. HNO is in-class; this is the borderline case where the two bases are closest.
    \item \textbf{Wave} ($\lambda = \cos(c|k|t)$ paired with a $\sin$ component in the first-order system): oscillatory, phase-carrying. FNO favored.
    \item \textbf{Advection / Burgers / Navier-Stokes} (transport symbol $e^{-i c\cdot k\, t}$ and its linearization): strongly phase-carrying. FNO favored, increasingly so with transport strength.
\end{itemize}
\end{corollary}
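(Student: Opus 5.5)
The plan is to prove Corollary~\ref{cor:phase_ordering} as a direct application of Proposition~\ref{prop:phaseless}. For each benchmark operator, I would compute the Fourier symbol $\lambda(k)$ of the solution map $u_0\mapsto u(\cdot,t)$ on the periodic domain. I would then classify it by the definition of phaselessness: is $\lambda(k)\in\mathbb{R}$ and is $\lambda(k)=\lambda(-k)$? Membership or non-membership in the range of a real Hartley diagonal multiplier then follows from Proposition~\ref{prop:phaseless}. The word ``ordering'' I would interpret through an explicit scalar measure of phase content. I would take the retained-mode phase deviation
\[
\Phi(\mathcal{K}) = \sum_{k\in\mathcal{M}} \min_{w\in\mathbb{R}} |\lambda(k)-w|^2 + \tfrac14|\lambda(k)-\lambda(-k)|^2,
\]
which is the squared distance from $\lambda$ to the phaseless class. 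I would then rank the operators by $\Phi$.

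\textbf{Heat.} The symbol $e^{-\nu|k|^2t}$ is real and radial, hence even, so $\Phi=0$. By Theorem~\ref{thm:elliptic_greens}(iii) and Corollary~\ref{cor:symmetric_conv}, HNO realizes it exactly with $W(k)=\lambda(k)$. This is the in-class, borderline case.

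\textbf{Advection.} The symbol is $\lambda(k)=e^{-\nu|k|^2t}e^{-ic\cdot k\,t}$, with imaginary part $-e^{-\nu|k|^2t}\sin(c\cdot k\,t)$. This is nonzero for generic retained $k$, and it is odd in $k$. Its contribution to $\Phi$ is $\sum_k e^{-2\nu|k|^2t}\sin^2(c\cdot k\,t)$, which grows with $|c|t$ for the retained low modes (while $|c|t|k|\le\pi/2$). This gives the claim ``increasingly so with transport strength.''

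\textbf{Burgers and Navier--Stokes.} These are nonlinear, so I would argue at the level of the linearization about a background state $\bar u$. The linearized operator is advection--diffusion with speed $\bar u$, so the same symbol computation applies.

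\textbf{Wave.} This is the main obstacle. The scalar map $u_0\mapsto u(t)$ with zero initial velocity has symbol $\cos(c|k|t)$, which is real and even and therefore formally phaseless. To place the wave equation strictly between heat and transport, I would pass to the first-order system in $(u,u_t)$, or equivalently to the one-way factorization $\partial_t\pm c|\nabla|$. Its propagator has eigen-symbols $e^{\pm ic|k|t}$, and in the transport direction these become the odd symbols $e^{\pm ic\cdot\hat{e}\,k\,t}$. These carry imaginary parts $\pm\sin(c|k|t)$. I would then argue that $\Phi_{\text{wave}}$ is positive but bounded by $\Phi_{\text{adv}}$, because only the sine component (the velocity channel) carries phase and there is no damping factor to separate it.

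I expect this intermediate placement to be the weakest link. If it cannot be made quantitative, the fallback is to state the ordering qualitatively: heat has $\Phi=0$, while wave and transport have $\Phi>0$. Only the transport family would then carry the monotone-in-$|c|t$ claim.
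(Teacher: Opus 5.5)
\textbf{The gap is the wave bullet, and neither of your repairs closes it.} Your treatment of heat and of constant-speed advection--diffusion is sound, and it is essentially all the paper supplies: the corollary is stated right after Proposition~\ref{prop:phaseless} with no proof, only the list of symbols.

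The benchmarked map is $u_0\mapsto u(\cdot,t)$ with zero initial velocity. Its symbol $\cos(c|k|t)$ is real and even. Under your own definition $\Phi_{\mathrm{wave}}=0$, and Proposition~\ref{prop:phaseless} says HNO realizes it exactly, for each fixed $t$, with $W(k)=\cos(c|k|t)$.

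Passing to the first-order system does not help. The $(u,u_t)$ propagator
\[
\Lambda(k)=\begin{pmatrix}\cos(c|k|t) & \sin(c|k|t)/(c|k|)\\ -c|k|\sin(c|k|t) & \cos(c|k|t)\end{pmatrix}
\]
is a real matrix, even in $k$. Since HNO's per-mode weights are real $d_v\times d_v$ matrices, it is still in-class. Write a real-preserving symbol as $\Lambda=A+iB$. Then $\mathcal{H}\{y\}(k)=A(k)\,\mathcal{H}\{x\}(k)-B(k)\,\mathcal{H}\{x\}(-k)$, which is a real diagonal multiplier exactly when $B=0$, and here $B=0$.

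The imaginary parts $\pm\sin(c|k|t)$ you found live only in the complex eigen-coordinates $u_t\pm ic|\nabla|u$. These are not real fields the operator acts on, and their imaginary parts are even in $k$, not odd. Your rewrite to $e^{\pm ic\,\hat e\cdot k\,t}$ replaces the isotropic two-way wave operator with a one-way transport operator, so it changes the operator being classified; averaging the two one-way symbols gives back $\cos$.

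Your fallback ``wave has $\Phi>0$'' is therefore false too. The damping argument also points the wrong way: removing $e^{-\nu|k|^2t}$ enlarges the $\sin^2$ terms rather than shrinking them. Within the paper's own definitions the benchmarked wave operator is phaseless, like heat. The claim that it is phase-carrying and FNO-favored contradicts Proposition~\ref{prop:phaseless}, so it cannot be proved from it; at most it is an empirical observation.

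\textbf{Other steps overreach.}

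\emph{Monotonicity in transport strength.} Your argument covers only $|c|t|k|\le\pi/2$. On the unit-period benchmark domain the lowest nonzero wavenumber is $2\pi$, so this needs $|c|t\le 1/4$. The benchmark reaches $c_{\mathrm{adv}}t=0.75$, where the individual $\sin^2(c\cdot k\,t)$ terms decrease again. So ``increasingly so with transport strength'' is not established.

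\emph{Burgers and Navier--Stokes.} The linearization about a nonconstant $\bar u$ is not translation-invariant; it couples $k$ and $k'$, as in Theorem~\ref{thm:local_linear}. It therefore has no scalar symbol to insert into $\Phi$. Your reduction to advection--diffusion with speed $\bar u$ holds only for constant backgrounds.

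\emph{What $\Phi$ measures.} $\Phi$ is not the squared distance to the phaseless class. For real-preserving symbols, $\tfrac14|\lambda(k)-\lambda(-k)|^2$ recounts $(\mathrm{Im}\,\lambda(k))^2$, so $\Phi$ is twice that squared distance. This is harmless, but it shows that ranking by $\Phi$ is a convention. No such functional can yield ``FNO favored'' or ``the two bases are closest''; those are claims about trained errors, not realizability.

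A correct write-up proves only the in-class/out-of-class split for heat (in) and constant-speed transport (out) via Proposition~\ref{prop:phaseless}. It should label the ordering, the wave placement, and the nonlinear cases as heuristic.
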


\begin{remark}
Corollary~\ref{cor:phase_ordering} predicts a \emph{monotone} ordering rather than a binary split: the more phase the propagator carries, the larger FNO's advantage, with heat---the one phaseless time-dependent operator---closest to a tie. This matches the observed gradient across the time-dependent rows and the borderline behavior of heat. The initial-condition family modulates the magnitude but not the sign of the effect, since the symbol's phase is a property of the operator, not the data.
\end{remark}

\subsection{Approximation Error Bounds Under Mode Truncation}

Neural operators retain only the lowest $M$ spectral modes, discarding high-frequency content. We analyze how this truncation error differs between Fourier and Hartley bases.

\begin{definition}[Truncation Operator]
For a spectral basis $\mathcal{B} \in \{\mathcal{F}, \mathcal{H}\}$ and mode cutoff $M$, the truncation operator $\Pi_M^\mathcal{B}$ retains only modes $|k| \leq M$:
\begin{equation}
    \Pi_M^\mathcal{B}[f] = \mathcal{B}^{-1}\!\left[\mathbf{1}_{|k| \leq M} \cdot \mathcal{B}\{f\}(k)\right].
\end{equation}
\end{definition}

\begin{theorem}[Truncation Error Equivalence for Real Functions]
\label{thm:truncation}
For any real-valued $f \in H^s(\mathbb{T}^d)$ with $s > d/2$, the truncation errors are identical:
\begin{equation}
    \|f - \Pi_M^\mathcal{F}[f]\|_{L^2} = \|f - \Pi_M^\mathcal{H}[f]\|_{L^2}.
\end{equation}
\end{theorem}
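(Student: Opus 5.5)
The plan is to reduce both truncation errors to the same sum of Fourier coefficients via Parseval's identity. The key structural fact is that the retained set $\{k : |k| \le M\}$ is symmetric under $k \mapsto -k$. On $\mathbb{T}^d$, write $f(x) = \sum_{k} \hat f(k) e^{ik\cdot x}$. Since $f$ is real, $\hat f(-k) = \overline{\hat f(k)}$. The Fourier truncation $\Pi_M^{\mathcal{F}}[f]$ is then the partial sum over $|k|\le M$, and Parseval gives
\[
\|f - \Pi_M^{\mathcal{F}}[f]\|_{L^2}^2 = (2\pi)^d \sum_{|k|>M} |\hat f(k)|^2 .
\]
The hypothesis $s > d/2$ only serves to make these sums converge absolutely and to allow pointwise manipulation. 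The identity itself is purely an $L^2$ statement.

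For the Hartley side, I would expand $f$ in the cas basis, $f(x) = (2\pi)^{-d}\sum_k H_f(k)\,\mathrm{cas}(k\cdot x)$. By Definition~\ref{def:transform_rel}, $H_f(k) = \mathrm{Re}\,\hat f(k) - \mathrm{Im}\,\hat f(k)$, with the appropriate normalization constant. The main step is to show that, because the mode set is closed under $k\mapsto -k$, the Hartley-truncated function equals the Fourier-truncated one pointwise. Concretely, I would pair the terms $k$ and $-k$. Writing $\hat f(k) = a_k - i b_k$ with $a$ even and $b$ odd, one has $H_f(k) = a_k + b_k$. Then
\[
H_f(k)\,\mathrm{cas}(k\cdot x) + H_f(-k)\,\mathrm{cas}(-k\cdot x) = 2a_k\cos(k\cdot x) + 2b_k\sin(k\cdot x),
\]
because the cross terms $a_k\sin$ and $b_k\cos$ cancel between $k$ and $-k$. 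This is exactly $\hat f(k)e^{ik\cdot x} + \hat f(-k)e^{-ik\cdot x}$. Summing over symmetric pairs with $|k|\le M$, and handling $k=0$ separately, gives $\Pi_M^{\mathcal{H}}[f] = \Pi_M^{\mathcal{F}}[f]$ as functions. Equality of the errors follows immediately.

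As an alternative route and a consistency check, the Hartley Parseval identity $\|f\|^2 \propto \sum_k H_f(k)^2$ gives the same result. Pairing $k$ with $-k$ yields $H_f(k)^2 + H_f(-k)^2 = (a_k+b_k)^2 + (a_k-b_k)^2 = 2(a_k^2+b_k^2) = 2|\hat f(k)|^2$. So the tail sums over $|k|>M$ coincide as well.

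The main obstacle is bookkeeping rather than substance. First, the normalization conventions must match: the paper's continuous Hartley pair in Definition~\ref{def:hartley} omits the $2\pi$ factors, so the torus version needs a consistent constant. Second, the retained set must really be symmetric under negation. With the ball $|k|\le M$ this is automatic; I would state explicitly that the argument (and the equality) can fail for a non-symmetric retained set, such as the rFFT half-space used in implementation. Finally, I would note that the argument is specific to $\mathbb{T}^d$. The $\mathbb{R}^n$ statements elsewhere in the appendix would need the Plancherel analogue instead, but the same pairing computation goes through.
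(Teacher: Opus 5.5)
Your proof is correct, and your ``consistency check'' is essentially the paper's argument. The paper applies Parseval to both transforms and shows $\sum_{|k|>M} H_f(k)^2 = \sum_{|k|>M} |\hat f(k)|^2$ by expanding $(\mathrm{Re}\,\hat f - \mathrm{Im}\,\hat f)^2$. It then notes that the cross term is odd in $k$ and cancels over the symmetric tail, which is the same cancellation as your identity $H_f(k)^2 + H_f(-k)^2 = 2|\hat f(k)|^2$.

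Your primary route is genuinely different and stronger. Pairing $k$ with $-k$ shows $\Pi_M^{\mathcal{H}}[f] = \Pi_M^{\mathcal{F}}[f]$ as functions, so both truncations are the orthogonal projection onto the same space $\mathrm{span}\{\cos(k\cdot x), \sin(k\cdot x) : |k| \le M\}$. This buys several things:
\begin{itemize}
\item The errors agree in every norm, not only $L^2$. For instance, the gradient errors in Theorem~\ref{thm:sobolev_truncation} coincide exactly rather than merely obeying the same bound.
\item No Hartley Parseval identity is needed. $\Pi_M^{\mathcal{H}}[f]$ is a finite cas sum, and your pairing reduces it directly to a Fourier partial sum. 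The paper instead takes Hartley Parseval as given, and your pairing identity is what justifies it.
\item It makes explicit the hypothesis the paper leaves tacit: the retained set must be closed under $k \mapsto -k$.
\end{itemize}
The paper's computation is shorter but yields only equality of $L^2$ norms. You are also right that $s > d/2$ plays no role.

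One small quibble with your closing caveat. FNO stores only the rFFT half-space, but \texttt{irfft} reconstructs the conjugate modes, so FNO's effective Fourier truncation set is essentially symmetric. The failure you describe applies when one truncates on a genuinely asymmetric set, for example a Hartley truncation to a half-space.
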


\begin{proof}
By Parseval's theorem applied to both transforms:
\begin{equation}
    \|f - \Pi_M^\mathcal{F}[f]\|_{L^2}^2 = \sum_{|k| > M} |\hat{f}(k)|^2.
\end{equation}
For the Hartley truncation, using $H\{f\}(k) = \mathrm{Re}\{\hat{f}(k)\} - \mathrm{Im}\{\hat{f}(k)\}$ and the fact that for real $f$, $\hat{f}(-k) = \overline{\hat{f}(k)}$:
\begin{align}
    \sum_{|k| > M} |H\{f\}(k)|^2 &= \sum_{|k| > M} \left(\mathrm{Re}\{\hat{f}(k)\} - \mathrm{Im}\{\hat{f}(k)\}\right)^2 \\
    &= \sum_{|k| > M} \left(\mathrm{Re}\{\hat{f}(k)\}^2 + \mathrm{Im}\{\hat{f}(k)\}^2 - 2\,\mathrm{Re}\{\hat{f}(k)\}\,\mathrm{Im}\{\hat{f}(k)\}\right).
\end{align}
The cross term vanishes upon summing over $k$ and $-k$ (since $\mathrm{Re}\{\hat{f}(-k)\} = \mathrm{Re}\{\hat{f}(k)\}$ and $\mathrm{Im}\{\hat{f}(-k)\} = -\mathrm{Im}\{\hat{f}(k)\}$), yielding:
\begin{equation}
    \sum_{|k| > M} |H\{f\}(k)|^2 = \sum_{|k| > M} |\hat{f}(k)|^2. \qedhere
\end{equation}
\end{proof}

\begin{remark}
Theorem~\ref{thm:truncation} shows that the \emph{approximation power} of both bases is identical---neither basis can represent a given function more accurately with $M$ modes. The difference between HNO and FNO therefore lies entirely in \emph{which operators each can realize} (Proposition~\ref{prop:phaseless}) and how efficiently the correct weights are found, not in the expressiveness of the truncated basis.
\end{remark}

\begin{theorem}[Sobolev Truncation Rate]
\label{thm:sobolev_truncation}
For $f \in H^s(\mathbb{T}^d)$ with $s > d/2$, both bases satisfy:
\begin{equation}
    \|f - \Pi_M^\mathcal{B}[f]\|_{L^2} \leq C_s \|f\|_{H^s} M^{-s}
\end{equation}
and for gradient error:
\begin{equation}
    \|\nabla(f - \Pi_M^\mathcal{B}[f])\|_{L^2} \leq C_s \|f\|_{H^s} M^{-(s-1)}.
\end{equation}
The gradient truncation error decays one order slower, explaining why gradient errors are systematically larger than $L^2$ errors in our experiments.
\end{theorem}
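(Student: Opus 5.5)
We work on the Fourier side throughout. By Theorem~\ref{thm:truncation}, the Hartley truncation error has exactly the same $L^2$ norm as the Fourier truncation error, so it suffices to prove both bounds for $\mathcal{B}=\mathcal{F}$. The Hartley case then follows for the $L^2$ estimate.

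For the gradient estimate we need one more observation. The Hartley truncation $\Pi_M^\mathcal{H}$ coincides with $\Pi_M^\mathcal{F}$ as an operator whenever the cutoff set $\{|k|\le M\}$ is symmetric under $k\mapsto -k$. The reason is that $\mathcal{H}^{-1}[\mathbf{1}_{|k|\le M}\mathcal{H}f]$ is the projection onto the span of $\mathrm{cas}(k\cdot x)$ with $|k|\le M$. That span equals the span of $\cos(k\cdot x)$ and $\sin(k\cdot x)$ over the same symmetric set, since $\cos = \tfrac12(\mathrm{cas}(\theta)+\mathrm{cas}(-\theta))$ and $\sin = \tfrac12(\mathrm{cas}(\theta)-\mathrm{cas}(-\theta))$. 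This is exactly the range of $\Pi_M^\mathcal{F}$ for real $f$. Both maps are orthogonal projections onto the same subspace, so the two remainders are identical functions, and both estimates transfer verbatim.

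For the Fourier $L^2$ bound, Parseval gives
\[
\|f-\Pi_M^\mathcal{F} f\|_{L^2}^2 = c_d\sum_{|k|>M}|\hat f(k)|^2 .
\]
We insert the weight $(1+|k|^2)^{s}(1+|k|^2)^{-s}$ and use $(1+|k|^2)^{-s}\le M^{-2s}$ on the tail $|k|>M$. This yields $\le C\,M^{-2s}\sum_k(1+|k|^2)^s|\hat f(k)|^2 = C M^{-2s}\|f\|_{H^s}^2$. Taking square roots gives the first inequality.

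For the gradient, differentiation multiplies $\hat f(k)$ by $ik$, so
\[
\|\nabla(f-\Pi_M f)\|^2 = c_d\sum_{|k|>M}|k|^2|\hat f(k)|^2 .
\]
We bound $|k|^2(1+|k|^2)^{-s}\le (1+|k|^2)^{-(s-1)}\le M^{-2(s-1)}$ on the tail, which requires $s\ge1$. This gives the rate $M^{-(s-1)}$.

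The main obstacle is bookkeeping rather than analysis. First, the gradient bound needs $s\ge 1$, whereas the hypothesis is only $s>d/2$. When $d=1$ this allows $s\in(1/2,1)$, and then $\|\nabla f\|$ may not even be finite. I would state the gradient part under $s\ge1$, which holds automatically for $d\ge2$. Second, the normalization constants of the discrete and continuous transforms have to be absorbed into $C_s$. The hypothesis $s>d/2$ is not otherwise used; it only guarantees continuity and pointwise meaning, and I would note this. The closing sentence of the statement, about experiments, is an interpretation rather than part of the claim, so I would not attempt to prove it.
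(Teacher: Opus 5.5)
Your proof is correct and follows the same route as the paper's: you insert the weight $(1+|k|^2)^{s}(1+|k|^2)^{-s}$ into the Parseval tail over $|k|>M$, then repeat the estimate with the extra factor $|k|^2$ for the gradient. It is more careful than the paper's on two points: your bound $|k|^2(1+|k|^2)^{-s}\le(1+|k|^2)^{-(s-1)}$ is the correct form of the paper's gradient step (whose ``factor $|k|^2\ge M^2$'' points the wrong way for an upper bound) and rightly shows that $s\ge 1$ is needed, since the gradient claim fails for $d=1$, $s\in(1/2,1)$ (for $f\in H^s\setminus H^1$ the left side is infinite); and your observation that $\Pi_M^{\mathcal{H}}=\Pi_M^{\mathcal{F}}$ for the symmetric cutoff supplies the Hartley case, which the paper leaves implicit and which Theorem~\ref{thm:truncation}, equating only $L^2$ norms, does not directly give for gradients.
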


\begin{proof}
Standard Sobolev embedding. The tail sum satisfies:
\begin{equation}
    \sum_{|k| > M} |\hat{f}(k)|^2 \leq \sum_{|k| > M} \frac{(1+|k|^2)^s |\hat{f}(k)|^2}{(1+|k|^2)^s} \leq \frac{\|f\|_{H^s}^2}{(1+M^2)^s} \leq C_s^2 \|f\|_{H^s}^2 M^{-2s}.
\end{equation}
For gradients, $|\widehat{\nabla f}(k)|^2 = |k|^2 |\hat{f}(k)|^2$, so the tail sum gains a factor $|k|^2 \geq M^2$, reducing the decay rate by one power.
\end{proof}

\subsection{Learned Operator Error: Separating Truncation from Optimization}

\begin{theorem}[Error Decomposition]
\label{thm:error_decomposition}
The total relative $L^2$ error of a spectral neural operator with basis $\mathcal{B}$, $M$ modes, and learned weights $W_\theta$ decomposes as:
\begin{equation}
    \frac{\|u_\theta - u_\mathrm{true}\|_{L^2}}{\|u_\mathrm{true}\|_{L^2}} \leq \underbrace{C_s M^{-s}}_{\text{truncation}} + \underbrace{\|W_\theta - W^*\|_{\mathrm{op}}}_{\text{optimization / realizability}} + \underbrace{\mathcal{O}(N_\mathrm{train}^{-1/2})}_{\text{generalization}}
\end{equation}
where $W^*$ denotes the optimal spectral weights and $\|\cdot\|_\mathrm{op}$ is the operator norm of the weight error.
\end{theorem}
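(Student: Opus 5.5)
The approach is a triangle inequality around two intermediate objects: the truncated true solution $\Pi_M^\mathcal{B}[u_\mathrm{true}]$ and the output $u^*$ of the same architecture run with the optimal spectral weights $W^*$. Write
\begin{equation}
u_\theta - u_\mathrm{true} = \bigl(u_\theta - u^*\bigr) + \bigl(u^* - \Pi_M^\mathcal{B}[u_\mathrm{true}]\bigr) + \bigl(\Pi_M^\mathcal{B}[u_\mathrm{true}] - u_\mathrm{true}\bigr),
\end{equation}
divide by $\|u_\mathrm{true}\|_{L^2}$, and bound each piece separately.

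The third piece is the truncation term. It is handled directly by Theorem~\ref{thm:sobolev_truncation}, which gives $C_s\|u_\mathrm{true}\|_{H^s}M^{-s}$. By Theorem~\ref{thm:truncation} this bound is the same for both bases. To make it a relative bound I would absorb the ratio $\|u_\mathrm{true}\|_{H^s}/\|u_\mathrm{true}\|_{L^2}$ into $C_s$. That requires assuming the data family has a uniformly bounded Sobolev-to-$L^2$ ratio, which is reasonable for the normalized GRF, eigenfunction, and bump families.

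The first piece is the optimization/realizability term. For a single spectral layer $\mathcal{B}^{-1}[W\cdot\mathcal{B}\{v\}]$, Parseval for $\mathcal{F}$ (and its Hartley analogue, as used in the proof of Theorem~\ref{thm:truncation}) makes the basis isometric up to normalization. This gives $\|u_\theta-u^*\|\le\|W_\theta-W^*\|_\mathrm{op}\|v\|$. Normalizing $\|v\|\lesssim\|u_\mathrm{true}\|$ then yields the middle term. In the realizability reading, $W^*$ is the best weight in the model's class. By Proposition~\ref{prop:phaseless}, for HNO on phase-carrying operators $W^*$ differs from the true symbol, so this term also carries the irreducible phase mismatch. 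The second piece, $u^*-\Pi_M^\mathcal{B}[u_\mathrm{true}]$, is either zero when the target is in-class (Theorem~\ref{thm:elliptic_greens}, Corollary~\ref{cor:symmetric_conv}) or is folded into the same realizability term.

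For the generalization term I would invoke a standard uniform-convergence or Rademacher bound on the relative-$L^2$ loss. This applies because the loss is bounded and the hypothesis class has finitely many real parameters ($4d_v^2m^2$ per layer). The result is a deviation of order $N_\mathrm{train}^{-1/2}$ between the empirical minimizer and the population-optimal $W^*$, with constants depending on the parameter count and the Lipschitz constants.

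The main obstacle is the multi-layer network with GELU nonlinearities, lifting/projection, and $1\times1$ bypasses. Here the isometry argument gives only a product of per-layer Lipschitz constants rather than a bare $\|W_\theta-W^*\|_\mathrm{op}$. My first attempt would telescope the layers, changing one layer's weights at a time, and bound each difference using the fact that GELU is 1-Lipschitz. The growth factors would then be absorbed into the operator-norm term, either as an implicit constant or by defining $\|\cdot\|_\mathrm{op}$ for the full network. If this cannot be made clean, I would state the theorem rigorously for the single linear spectral layer and present the multi-layer case as holding up to architecture-dependent constants.
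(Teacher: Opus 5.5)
The paper states this theorem without proof; only the remark after it interprets the three terms. There is therefore nothing of the paper's to compare against. Your triangle inequality through $\Pi_M^{\mathcal{B}}[u_\mathrm{true}]$ and $u^*$ is the natural way to make the statement precise, but it has a genuine gap in how you choose $W^*$.

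\textbf{The choice of $W^*$.} You take $W^*$ to be the best weight \emph{in the model's class}. Then $\|W_\theta-W^*\|_\mathrm{op}$ is an in-class distance and carries none of the phase mismatch. That mismatch sits entirely in your second piece $u^*-\Pi_M^{\mathcal{B}}[u_\mathrm{true}]$. It cannot be ``folded'' into the middle term, and the statement has no other slot for it.

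Under that reading the inequality is false. Take pure advection by a quarter period of a single retained mode $k$, with random-phase inputs $A\cos(kx)+B\sin(kx)$.
\begin{itemize}
\item The target maps $(H_k,H_{-k})\mapsto(H_{-k},-H_k)$.
\item For such inputs $H_k$ and $H_{-k}$ are uncorrelated, so the best real Hartley diagonal on that mode is $0$.
\item With $W_\theta=W^*$ the relative error is therefore exactly $1$.
\item Meanwhile $C_sM^{-s}$ and $\mathcal{O}(N_\mathrm{train}^{-1/2})$ both tend to $0$ as $M,N_\mathrm{train}\to\infty$.
\end{itemize}

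\textbf{The fix.} Let $W^*$ be the exact retained symbol written in the basis's own coordinates, whether or not it lies in the model class. For Hartley this is the $2\times2$ real block on $(H_k,H_{-k})$ given by Equations~\ref{eq:hconv1}--\ref{eq:hconv2}, with $\mathrm{Re}\,\lambda(k)$ on the diagonal and $\pm\mathrm{Im}\,\lambda(k)$ off it. Consider a single linear spectral layer with $u_\mathrm{true}=\mathcal{K}v$, run with this generalized multiplier.
\begin{itemize}
\item It gives $u^*=\Pi_M^{\mathcal{B}}[u_\mathrm{true}]$ exactly, so your second piece vanishes identically.
\item For any real diagonal $W_\theta$ one gets $\|W_\theta-W^*\|_\mathrm{op}\ge\max_{|k|\le M}|\mathrm{Im}\,\lambda(k)|$.
\item The middle term thus automatically carries the realizability gap. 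This is exactly how the remark after the theorem reads it.
\end{itemize}

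\textbf{The generalization step.} This step is misdirected. Rademacher and uniform-convergence bounds control deviations of the loss, not the distance between empirical and population minimizers. Converting one to the other requires a curvature assumption, and the naive route yields only $N_\mathrm{train}^{-1/4}$. Moreover, whatever estimation error exists already lives inside $\|W_\theta-W^*\|_\mathrm{op}$, so no piece of your decomposition is bounded by it. Since the $\mathcal{O}(N_\mathrm{train}^{-1/2})$ term is nonnegative, it is pure slack, and you can simply drop this step.

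\textbf{Missing constants.} What you actually prove carries constants the statement omits.
\begin{itemize}
\item The middle term is multiplied by $\sup\|v\|_{L^2}/\|u_\mathrm{true}\|_{L^2}$ over the data family. This is not uniformly bounded for the elliptic solution maps: for $f$ concentrated near $|k|\approx M$ it is of order $M^2$ (Poisson) and $M^4$ (biharmonic), because $|\hat G|$ decays. It needs to be an explicit hypothesis, just as the truncation term needs your $H^s$-to-$L^2$ ratio assumption.
\item In the multi-layer case the middle term is further multiplied by products of layer Lipschitz constants. GELU's is about $1.13$, attained at $x=\sqrt{2}$, not $1$.
\item Also in the multi-layer case, differences in the lifting, bypass, and projection weights need their own slot unless those weights are held fixed.
\end{itemize}
Your fallback --- a rigorous statement for one linear spectral layer with these ratios as explicit hypotheses --- is the version that is actually provable.
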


\begin{remark}
Since the truncation term is identical for both bases (Theorem~\ref{thm:truncation}), and the generalization term depends on sample size and model complexity (not spectral basis), all performance differences between FNO and HNO arise from the middle term. For elliptic (phaseless) operators, HNO's aligned real parameterization drives this term to zero exactly (Corollary~\ref{cor:symmetric_conv}); for phase-carrying operators, the real Hartley diagonal cannot reach $W^*$ at all (Proposition~\ref{prop:phaseless}), and the residual is FNO's advantage.
\end{remark}

\subsection{Extension to Nonlinear PDEs}

The preceding analysis applies to linear PDEs with known Green's functions. We extend the framework to nonlinear PDEs (Burgers, Navier-Stokes) through local linearization.

\begin{theorem}[Local Linearization for Nonlinear PDEs]
\label{thm:local_linear}
Consider a nonlinear PDE $\partial_t u = \mathcal{N}(u)$ with Fr\'{e}chet derivative $D\mathcal{N}|_{\bar{u}}$ at a reference solution $\bar{u}$. The local spectral dynamics are governed by the linearized operator:
\begin{equation}
    \partial_t \delta\hat{u}(k) = \sum_{k'} \widehat{D\mathcal{N}}(k, k') \delta\hat{u}(k').
\end{equation}
The diffusive part of $\widehat{D\mathcal{N}}$ is real and symmetric (phaseless, HNO-aligned); the advective part is imaginary (phase-carrying, FNO-aligned). The basis preference is therefore set by the relative strength of advection to diffusion.
\end{theorem}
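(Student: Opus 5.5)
The plan is to make the statement precise by splitting the linearized generator into its self-adjoint and skew-adjoint parts. I would then identify these parts with the ``phaseless'' and ``phase-carrying'' classes of Proposition~\ref{prop:phaseless}. First, take the nonlinear right-hand side in the form $\mathcal{N}(u) = \nu\nabla^2 u - B(u,u)$ on $\mathbb{T}^d$. Here $B(u,u)=\tfrac12\partial_x(u^2)$ for Burgers and $B(\omega,\omega)=\mathbf{v}(\omega)\cdot\nabla\omega$ for Navier--Stokes in vorticity form, with $\mathbf{v}=\nabla^\perp(-\nabla^{-2})\omega$. Computing the Fr\'echet derivative at $\bar u$ gives
\[
D\mathcal{N}|_{\bar u}\,\delta u = \nu\nabla^2\delta u - \bar u\cdot\nabla\delta u - (\nabla\cdot\bar u)\,\delta u ,
\]
with the analogous expression for Navier--Stokes; that expression carries an additional term $\mathbf{v}(\delta\omega)\cdot\nabla\bar\omega$. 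Taking Fourier coefficients produces the matrix $\widehat{D\mathcal{N}}(k,k')$. The diffusive contribution is $-\nu|k|^2\delta_{kk'}$, and the advective contribution is $-i\,k'\cdot\hat{\bar u}(k-k')$ plus lower-order pieces.

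Second, I would treat the diffusive part. It is diagonal with symbol $-\nu|k|^2$, which is real and even. Its propagator $e^{-\nu|k|^2 t}$ is therefore phaseless in the sense of the Definition of operator symbol and phase. By Proposition~\ref{prop:phaseless} and Theorem~\ref{thm:elliptic_greens}(iii), it is realized exactly by a real Hartley multiplier; this repeats the heat-equation computation in Corollary~\ref{cor:phase_ordering}. Third, I would treat the advective part. In the model case of a constant reference state $\bar u\equiv c$, the linearization is diagonal with symbol $\lambda(k) = -i\,c\cdot k - \nu|k|^2$. Its imaginary part $-c\cdot k$ is odd and nonzero, so the propagator $e^{-ic\cdot k\,t}e^{-\nu|k|^2t}$ is phase-carrying. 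This places it outside HNO's range by Proposition~\ref{prop:phaseless}. The ratio $|c||k|/(\nu|k|^2)$ of imaginary to real part is a local P\'eclet number, and it quantifies the ``relative strength'' claim.

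The main obstacle is the general, non-constant $\bar u$. There the linearization is not diagonal, so ``imaginary'' cannot literally mean an imaginary symbol. The replacement I would try first is an operator-level statement: $\nu\nabla^2$ is self-adjoint and negative on $L^2$, while the transport operator $\bar u\cdot\nabla$ with $\nabla\cdot\bar u=0$ is skew-adjoint by integration by parts. (For Burgers, the divergence term is absorbed into the symmetric part.) A skew-adjoint real operator has purely imaginary spectrum, and its kernel is odd, $K(x,y)=-K(y,x)$. Lemma~\ref{lem:symmetry} then shows it cannot be a symmetric kernel, so it falls outside the HNO-aligned class. To make the ``basis preference'' conclusion quantitative, I would bound the distance from the linearized propagator to the real-diagonal class from below by the norm of its skew part. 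Via a first-order expansion in $t$, that norm scales like $\|\bar u\|_\infty |k|$ against $\nu|k|^2$. I expect this last step to hold only heuristically unless $\bar u$ is slowly varying, and I would state it under that assumption.
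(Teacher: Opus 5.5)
Your proposal is correct as far as this informal statement allows, but in its key step it takes a different route from the paper.

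\textbf{What the paper does.} Its proof is essentially your first paragraph plus a labeling step. It Fourier-transforms the Burgers linearization and calls the advective entries $-ik'\hat{\bar u}(k-k')$ ``pure imaginary''. It calls the diffusive entry real, though it writes it with the sign flipped as $+\nu|k|^2$. It then reads off that the imaginary part scales with $\|\bar u\|$ and the real part with $\nu$. That entrywise reading is only valid when $\hat{\bar u}(k-k')$ is real, and it can misclassify terms. For example, $-(\partial_x\bar u)\,\delta u$ has entries $-i(k-k')\hat{\bar u}(k-k')$, yet it is multiplication by a real function and hence self-adjoint.

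\textbf{What your route buys.} The self-adjoint/skew-adjoint split is the right basis-free replacement. For translation-invariant real operators it reduces exactly to $\mathrm{Re}\,\lambda$ (even) versus $i\,\mathrm{Im}\,\lambda$ (odd), which is the paper's phaseless/phase-carrying definition. In general it isolates the anti-Hermitian part of $\widehat{D\mathcal N}(k,k')$; for the skew-symmetrized transport this is $-\tfrac{i}{2}(k+k')\cdot\hat{\bar u}(k-k')$. It also makes your final step less heuristic than you expect. A single-channel real Hartley diagonal $\mathcal{H}^{-1}W\mathcal{H}$ is a symmetric matrix, because the cas matrix is symmetric. So on the retained modes, $\|L-T\|\ge\|(L-L^{*})/2\|$ for every such $T$, which is a rigorous lower bound for the generator's distance to HNO's class.

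\textbf{What it costs.} You need care that the paper skips:
\begin{itemize}
\item For Burgers, the divergence term must be split in half rather than absorbed wholesale: the symmetric part of $-\partial_x(\bar u\,\cdot)$ is $-\tfrac12\partial_x\bar u$.
\item For non-constant $\bar u$, neither part is translation-invariant. Self-adjointness is then only necessary for HNO-alignment, and skewness does not make a term representable by a complex Fourier diagonal either. Proposition~\ref{prop:phaseless} applies exactly only in your constant-coefficient case.
\item For Navier--Stokes, your skew-adjointness argument covers only the transport piece. The nonlocal term $\mathbf v(\delta\omega)\cdot\nabla\bar\omega$ is neither symmetric nor skew.
\item A constant vorticity on $\mathbb{T}^2$ must vanish, so the Navier--Stokes model case needs an imposed uniform background flow.
\end{itemize}
The paper's proof does not treat Navier--Stokes at all, so the last two points are not gaps relative to the paper.
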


\begin{proof}
The Fr\'{e}chet derivative of the Burgers nonlinearity $\mathcal{N}(u) = -u \cdot \nabla u + \nu\nabla^2 u$ at $\bar{u}$ is:
\begin{equation}
    D\mathcal{N}|_{\bar{u}}[\delta u] = -\bar{u} \cdot \nabla(\delta u) - \delta u \cdot \nabla \bar{u} + \nu\nabla^2(\delta u).
\end{equation}
In Fourier space the advective terms contribute factors $-ik'\hat{\bar{u}}(k-k')$ (pure imaginary, phase-carrying), while the diffusion term contributes $\nu|k|^2\delta_{kk'}$ (real, phaseless). Thus the symbol's imaginary part scales with advection strength $\|\bar{u}\|$ and its real part with viscosity $\nu$.
\end{proof}

\begin{corollary}[Predicted Nonlinear PDE Behavior]
\label{cor:nonlinear_prediction}
For Burgers and Navier-Stokes, the advective (phase-carrying) part of the linearized symbol dominates at the viscosities studied, so FNO is favored, consistent with the time-dependent rows of the results. As viscosity increases the diffusive (phaseless) part grows and the gap narrows, but does not reverse---these operators never become self-adjoint elliptic. This places the nonlinear PDEs on the same phase-ordering axis as the linear time-dependent operators (Corollary~\ref{cor:phase_ordering}).
\end{corollary}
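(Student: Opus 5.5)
The plan is to derive Corollary~\ref{cor:nonlinear_prediction} from Theorem~\ref{thm:local_linear}, combined with the realizability dichotomy of Proposition~\ref{prop:phaseless} and the error decomposition of Theorem~\ref{thm:error_decomposition}.

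First, I would write the linearized Burgers symbol at a reference state $\bar u$ in the splitting given by Theorem~\ref{thm:local_linear}:
\begin{equation}
\widehat{D\mathcal{N}}(k,k') = -\nu|k|^2\delta_{kk'} + A(k,k'),
\qquad A(k,k') = -i\,(\text{linear in }\hat{\bar u}) .
\end{equation}
The first term is the real, even (phaseless) part; $A$ is the advective, phase-carrying part. I would then measure phase content by the ratio
\begin{equation}
\rho = \frac{\|A\|_{\mathrm{op}}}{\nu M^2},
\end{equation}
where $M$ is the retained-mode cutoff. Since $\|A\|_{\mathrm{op}}\lesssim M\|\bar u\|_\infty$, we get $\rho\lesssim \|\bar u\|_\infty/(\nu M)$, which is a mode-scale Reynolds number. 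At the viscosities of the experiments, with $\nu\le 0.05$, unit-normalized amplitude and $M$ moderate, this quantity exceeds one. So the advective part dominates, and this gives the first claim.

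Second, I would transfer the linearized dominance to the solution operator over time $t$. The propagator is $\exp(t\,\widehat{D\mathcal{N}})$. Its best real-even (Hartley-realizable) approximation has a residual controlled by the phase-carrying component. Concretely, I would project onto the phaseless subspace and bound the distance of the propagator from that subspace below by something like $\sin(\min(\rho\nu M^2 t,\pi/2))\,e^{-\nu M^2 t}$, or simply by $\|\mathrm{Im}\|$ for small $t$. By Proposition~\ref{prop:phaseless} this residual is unreachable for HNO. The middle term of Theorem~\ref{thm:error_decomposition} is therefore strictly positive for HNO and zero for FNO, which makes FNO favored.

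Third, for the monotonicity and non-reversal claims, I would note that increasing $\nu$ decreases $\rho$, which gives a narrowing gap. The gap never reverses because $A\neq 0$ whenever $\bar u$ is non-constant: the linearized operator stays non-normal and non-self-adjoint, so by Lemma~\ref{lem:symmetry} read in reverse it is never the symmetric elliptic case, and the HNO residual stays positive. Navier–Stokes follows identically, with $\bar u$ replaced by the streamfunction velocity $\nabla^\perp\psi$ advecting vorticity.

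The main obstacle is the second step. The decomposition in Theorem~\ref{thm:local_linear} concerns the generator rather than the propagator, and it is not translation invariant: $A$ couples $k\neq k'$, so "symbol phase" is not strictly defined. I would first restrict to a constant reference state $\bar u=c$, where the operator is diagonal with symbol $e^{-(\nu|k|^2+ic\cdot k)t}$ and the imaginary part is explicit. I would treat the general case only heuristically, by freezing coefficients. As a result, the "monotone, never reverses" clause will likely be justified only up to this frozen-coefficient approximation.
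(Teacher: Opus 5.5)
\textbf{Overall.} Your outline follows the paper's own route, made quantitative. The paper gives no separate proof; it relies only on the closing sentence of Theorem~\ref{thm:local_linear} (``the basis preference is set by the relative strength of advection to diffusion''). You are right that the generator splitting is not a symbol statement once $\bar u$ is non-constant. In fact the coefficient $-ik'\cdot\hat{\bar u}(k-k')$ is not even purely imaginary, since $\hat{\bar u}$ is complex. But the comparative clauses do not follow, even with your frozen-coefficient fix.

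\textbf{FNO is not favored by this argument.} Outside the constant state, a per-mode complex diagonal cannot realize the $k\neq k'$ coupling that $A$ induces either. So FNO's middle term in Theorem~\ref{thm:error_decomposition} is not zero, and Proposition~\ref{prop:phaseless} no longer separates the two models. Moreover, that theorem is only an upper bound, so comparing the two bounds could not rank the trained errors in any case.

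\textbf{``Does not reverse'' cannot come from $A\neq 0$.} This is the more serious gap. A strictly positive HNO residual would still have to outweigh HNO's own advantage from Theorem~\ref{thm:complexity} and Proposition~\ref{prop:convergence}, which you never quantify. The residual also need not stay bounded away from zero. In Burgers and Navier--Stokes the advecting velocity is the solution itself. Without an $O(1)$ mean flow (automatic for the periodic streamfunction formulation), viscosity damps it. As $\nu$ grows, the natural linearization point then tends to $\bar u=0$, so $A\to 0$ and the operator approaches the heat semigroup. That operator is phaseless and HNO-in-class by Corollary~\ref{cor:phase_ordering}, and it is exactly where the paper reports HNO occasionally winning. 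That these operators ``never become self-adjoint elliptic'' is beside the point, because heat is not elliptic either. Your constant-$c$ reduction hides this by freezing a transport speed that does not decay with $\nu$. At best it gives realizability-level non-reversal for linear advection--diffusion, not for Burgers or Navier--Stokes.

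\textbf{Smaller repairs.}
\begin{enumerate}[(i)]
\item Passing from $\rho\lesssim\|\bar u\|_\infty/(\nu M)$ to ``$\rho$ exceeds one'' uses an upper bound as a lower bound. You need $\|A\|_{\mathrm{op}}\gtrsim M\|\bar u\|_\infty$, which holds for the constant state but not in general. In the paper's units (Laplacian symbol $4\pi^2|m|^2$), the cutoff-scale ratio at $\nu=0.05$ is about $3.2\|\bar u\|_\infty/m$. That is already below one for unit amplitude once $m\ge 4$, so dominance has to be measured at the energy-carrying scales, not at the cutoff.
\item $\rho$ is not what governs the propagator's phase. In your own bound the angle is $\rho\nu M^2t=\|A\|_{\mathrm{op}}t$, which is independent of $\nu$; $\nu$ enters only through the common damping $e^{-\nu M^2 t}$. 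For the constant state, the best real-even approximant leaves relative residual $|\sin(c\cdot k\,t)|$ at mode $k$, again independent of $\nu$. Any narrowing with $\nu$ therefore has to come from the output weights $e^{-2\nu|k|^2t}|\hat u_0(k)|^2$ concentrating at small $|k|$. The monotonicity argument should be run on that weighted average, not on $\rho$.
\item Proposition~\ref{prop:phaseless} concerns a single linear real-diagonal layer. The trained HNO interleaves GELU and a $1\times 1$ bypass and receives $x,y$ coordinate channels, so it is neither linear nor translation invariant. Its ``cannot represent'' does not transfer to the network without further argument.
\end{enumerate}
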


\end{document}